\ifdefined\pdfoutput\pdfoutput=1\fi % force PDF route on pdfTeX/LuaTeX; XeTeX has no \pdfoutput and outputs PDF anyway
\documentclass{article}

\usepackage[
    top=1in,
    bottom=1in,
    left=1.25in,
    right=1.25in
]{geometry}
\usepackage{amsmath}
\usepackage{hyperref}
\usepackage{cleveref}
\usepackage{algorithm}
\usepackage{amssymb}
\usepackage{graphicx}
\usepackage{booktabs}
\usepackage{makecell}
\usepackage{algpseudocode}
\usepackage{tikz}
\usepackage[dvipsnames]{xcolor}
\graphicspath{{figures/}}

\providecommand{\citep}{\cite}

\usepackage{amsthm}

\newtheorem{theorem}{Theorem}[section]

\newtheorem{lemma}[theorem]{Lemma}
\newtheorem{assumption}[theorem]{Assumption}
\newtheorem{proposition}[theorem]{Proposition}

\theoremstyle{definition}
\newtheorem{definition}[theorem]{Definition}

\theoremstyle{remark}

\crefname{assumption}{Assumption}{Assumptions}

\usepackage{mathtools}

\DeclareMathOperator{\Cov}{Cov}
\DeclareMathOperator{\Var}{Var}
\DeclareMathOperator{\Tr}{Tr}

\newcommand{\trans}{\mathsf{T}}

\newcommand{\mR}{\mathbb{R}}
\newcommand{\mN}{\mathcal{N}}

\newcommand{\mbE}{\mathbb{E}}
\newcommand{\mbP}{\mathbb{P}}
\newcommand{\mbQ}{\mathbb{Q}}

\newcommand{\md}{\mathop{}\!\mathrm{d}}

\newcommand{\mH}{\mathcal{H}}

\newcommand{\mM}{\mathcal{M}}

\newcommand{\norm}[1]{\left\| #1 \right\|}
\newcommand{\abs}[1]{\left| #1 \right|}
\newcommand{\KL}{D_{\mathrm{KL}}}
\newcommand{\KLdiv}[2]{\KL\left(#1\,\middle\|\,#2\right)}% KL divergence with spaced middle bar
\newcommand{\TV}[2]{\mathrm{TV}\left(#1,#2\right)}% KL divergence with spaced middle bar

\newcommand{\me}{\mathrm{e}}

\newcommand{\Nreg}{{N_\mathrm{reg}}}

\newcommand{\mf}{\mathrm{f}}
\newcommand{\ma}{\mathrm{a}}
\newcommand{\pf}{\pi^{\mathrm{f}}}
\newcommand{\pa}{\pi^{\mathrm{a}}}

\newcommand{\hpa}{\hat{\pi}^{\mathrm{a}}}
\newcommand{\tildepa}{\tilde{\pi}^{\mathrm{a}}}

\crefname{theorem}{Theorem}{Theorems}            \Crefname{theorem}{Theorem}{Theorems}
\crefname{lemma}{Lemma}{Lemmas}                  \Crefname{lemma}{Lemma}{Lemmas}
\crefname{proposition}{Proposition}{Propositions}\Crefname{proposition}{Proposition}{Propositions}
\crefname{corollary}{Corollary}{Corollaries}     \Crefname{corollary}{Corollary}{Corollaries}
\crefname{definition}{Definition}{Definitions}   \Crefname{definition}{Definition}{Definitions}
\crefname{assumption}{Assumption}{Assumptions}   \Crefname{assumption}{Assumption}{Assumptions}
\crefname{remark}{Remark}{Remarks}               \Crefname{remark}{Remark}{Remarks}
\AddToHook{env/theorem/begin}{\crefalias{theorem}{theorem}}
\AddToHook{env/lemma/begin}{\crefalias{theorem}{lemma}}
\AddToHook{env/proposition/begin}{\crefalias{theorem}{proposition}}
\AddToHook{env/corollary/begin}{\crefalias{theorem}{corollary}}
\AddToHook{env/definition/begin}{\crefalias{theorem}{definition}}
\AddToHook{env/assumption/begin}{\crefalias{theorem}{assumption}}
\AddToHook{env/remark/begin}{\crefalias{theorem}{remark}}

\usepackage[defaultcolor=OrangeRed,commentmarkup=margin]{changes}

\usepackage{authblk}

\newcommand{\email}[1]{\texttt{#1}}

\title{Ensemble Controlled-Flow Filtering for Implicit \\ Data Assimilation

}

\author[1]{Zhuoyuan Li%
\thanks{Equal contributions.}%
\thanks{Corresponding to Z.L. (\email{zy.li@nus.edu.sg}).
}%
}
\author[1]{Yue Zhao\protect\footnotemark[1]}
\author[2]{Ming Li}

\affil[1]{Institute for Functional Intelligent Materials, National University of Singapore, Singapore 117544, Singapore}
\affil[2]{School of Mathematical Sciences, Fudan University, Shanghai 200433, China}

\date{}

\begin{document}
\maketitle

\begin{abstract}
Data assimilation estimates the state of a dynamical system from model forecasts and incoming observations. Many observation mechanisms, however, are many-to-one, implicit, non-smooth, or accessible only through simulation, and need not provide the residual structures or likelihood guidance required by existing ensemble filters. We introduce implicit data assimilation, in which the analysis law is defined as an energy tilt of the forecast distribution. We then propose the Ensemble Controlled-flow Filter (EnCF), which realizes this update through a stochastic controlled flow and learns the observation-dependent control by adjoint matching from terminal energy gradients. For simulator-defined observations, EnCF-LF learns a surrogate conditional energy from samples and applies the same controlled-flow solver. We prove ideal exactness, derive a one-step error decomposition, and establish non-accumulation of local errors under filter stability. Numerical results show that Kalman-type filters remain preferable for smooth additive-Gaussian observations, while the proposed methods are better suited to non-Gaussian, many-to-one, multimodal, and implicit observation models.

\end{abstract}

\providecommand{\keywords}[1]{
    \vspace{0.2cm} % Adds a little space below the abstract
    \noindent \small \textbf{\textit{Keywords:}} #1
}

\begin{keywords}
{data assimilation, ensemble filtering, data-driven modeling, stochastic optimal control, adjoint matching}
\end{keywords}

% \begin{MSCcodes}
% 65C30, %: Stochastic differential and integral equations
% 65C05, %: Monte Carlo methods
% 62M20, %: Prediction; filtering
% 93E11, %: Filtering
% 93E20 %: Optimal stochastic control
% % 65C05, 62M20, 93E11, 65C30
% \end{MSCcodes}

\providecommand{\proofbox}{\qed}

% Shared main text (sections + appendix). Compiled by main_els.tex and main_siam.tex.
\section{Introduction}
\label{sec:intro}
Multiscale modeling in scientific computing concerns the relation between an
underlying dynamical description and the quantities accessible to computation,
experiment, or measurement
\citep{weinan2011principles,pavliotis2008multiscale}.
From a forward perspective, one derives, approximates, or learns reduced
mesoscopic or macroscopic dynamics for relevant observables
\citep{chen2024constructing,li2026hypothesis}.
A complementary inverse perspective seeks to recover the evolving underlying
state from partial, noisy, or indirect observations rather than to construct a
closed model for those observables. This is the setting of data assimilation
(DA), which combines dynamical models with observational data to infer states,
trajectories, and other latent quantities
\citep{asch2016da,law2015da}. Originating in numerical weather prediction
\citep{kalnay2003}, DA is now widely used in oceanography, geophysics, ecology,
and other partially observed dynamical systems
\citep{martin2024da-ocean,carrassi2018review,luo2011da-ecological,Wander2024da-ecological}.

In this work, we focus on the sequential data-assimilation setting. Sequential DA
tracks the evolving state by alternating between a forecast step, which propagates
the current state uncertainty through the dynamical model, and an analysis step,
which incorporates newly available observations \citep{asch2016da,law2015da}.
Among numerical realizations of this recursion,
ensemble-based methods are the scalable approach most widely used in large-scale systems \citep{evensen2009}.
They represent the forecast uncertainty by a finite
collection of sample states and update this ensemble whenever new data arrive.
Once a
forecast ensemble has been generated, the central numerical task considered here
is the analysis transformation mapping from the forecast distribution $\pf$ to the Bayesian analysis distribution $\pa\propto\ell(\cdot;y)\pf$ by taking into account the likelihood model $\ell$ and the observation data $y$.

Existing ensemble filters differ mainly in the computational interface through which they use the observation model. For additive-Gaussian observations, the EnKF constructs its analysis update from empirical first and second moments \citep{evensen2009,hunt2007letkf}. The bootstrap particle filter evaluates the likelihood at forecast particles and then resamples according to the resulting importance weights \citep{Arulampalam2002PF-tutorial,vanLeeuwen2009PF}. Among recent flow methods, EnSF evaluates the observation-likelihood score along a diffusion path, whereas EnFF uses likelihood-reweighted Monte-Carlo guidance or a localized likelihood-gradient approximation \citep{bao2024ensf,enff}. These interfaces are effective when the observation model provides the required quantities, but many scientific sensing mechanisms do not. The map from state to observation can be non-smooth, many-to-one, implicit, or available only through a simulator. For example, a quantized sensor reports only the interval in which a value falls \citep{anderson2010rhf}, an intensity measurement records magnitude but discards phase \citep{shechtman2015phase}, and a radiative-transfer code may provide forward samples without a tractable likelihood \citep{cranmer2020sbi}. These observations can still define informative analysis updates, but they may not supply a reliable additive residual, evaluable likelihood weights, or analytic likelihood guidance along a transport path. This motivates separating the distribution targeted by the analysis from the numerical mechanism used to realize the update.

We encode the observation information by a scalar energy \(J(x;y)\) and define the target analysis law by
\(\pa(x\mid y)\propto\pf(x)\exp[-J(x;y)]\). When \(J(x;y)=-\log p(y\mid x)\), this is the standard Bayesian posterior. More generally, \(J\) may be a compatibility energy that is not a normalized observation density, or a surrogate learned from simulator samples. We call this formulation \emph{implicit data assimilation}. Because the target is defined as a tilt of the forecast law rather than as a Gaussian moment update, it can represent multiple state configurations compatible with a many-to-one observation.

\begin{figure}
    \centering
    \includegraphics[width=\linewidth]{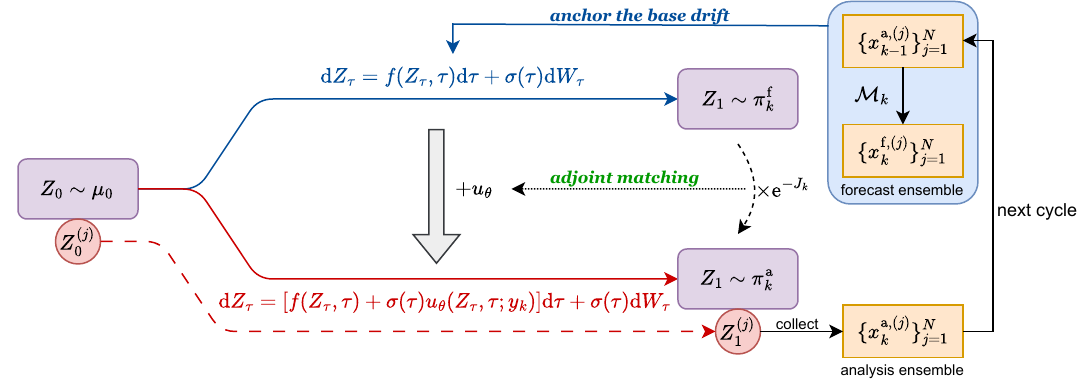}
    \caption{Illustration of the ensemble controlled-flow framework.}
    \label{fig:encf-illustration}
\end{figure}

To approximate this target with a finite ensemble, the ensemble controlled-flow filter introduces an auxiliary transport time and steers a base diffusion whose terminal law is the forecast distribution. Under the memoryless construction and exact optimal control, the controlled terminal law equals the energy-based analysis law. In the practical method, the optimal control is approximated by adjoint matching \citep{domingoenrich2025adjoint}: terminal gradients \(\nabla J(Z_1;y)\) are propagated backward through the base dynamics to form regression targets for the control. EnCF therefore requires a differentiable analytic energy, but it does not prescribe an observation-likelihood gradient at every transport-time step.
\Cref{fig:encf-illustration} summarizes this design. The observation reaches the analysis only through
the energy interface, and one assimilation cycle realizes the tilt as a controlled transport of the
forecast ensemble.

When the observation is represented by a differentiable analytic energy, EnCF uses that energy
directly. When only simulator samples are available, when a Gaussian residual model gives the wrong
conditional law, or when direct gradients are unavailable or uninformative, its likelihood-free variant EnCF-LF learns a smooth surrogate energy function and uses its gradient in the same adjoint-matching solver. We analyze EnCF as a numerical
scheme for the Bayes filtering map. It is exact with an exact base flow and optimal control, its one-step
error splits into separately refinable approximation terms, and these local errors do not accumulate when
the exact filter is stable. 
The experiments indicate that the controlled-flow formulation is most useful in
regimes with non-Gaussian, multimodal, many-to-one, or implicit observation models.
The likelihood-free variant addresses the weaker-access case in which only
forward simulator samples are available, by learning the energy used by the same
controlled-flow solver.

Our contributions are summarized as follows.
\begin{enumerate}
\item \textbf{Implicit data assimilation.}
We formulate the analysis update through a scalar observation energy \(J\),
defining the target law as the exponential tilt
\(\pa\propto\pf\me^{-J}\) of the forecast distribution. This formulation
separates the specification of the observation model from the numerical
analysis mechanism and accommodates explicit likelihoods, implicit
constraints, joint models, and learned surrogate energies without requiring
an explicit forward observation map.
\item \textbf{Ensemble controlled-flow filter.}
We introduce the Ensemble Controlled-flow Filter, which realizes the
energy-defined analysis as a stochastic optimal-control problem. The
observation-dependent feedback control is approximated by adjoint matching
from terminal energy gradients. Its likelihood-free variant, EnCF-LF, learns
a conditional surrogate energy from simulator samples and applies the same
controlled-flow solver.

\item \textbf{Error analysis.}
We establish exactness in the ideal limit and derive a one-step
total-variation error decomposition that separates the effects of the
empirical base flow, control approximation, surrogate energy, and time
discretization. Under exponential stability of the exact filtering recursion,
we further show that uniformly bounded local errors yield a uniformly bounded
multi-step filtering error.
\item \textbf{Regime-based evaluation.}
We compare EnCF and EnCF-LF with classical and flow-based ensemble filters
across additive-Gaussian, non-Gaussian, many-to-one, multimodal, and implicit
observation models. The results empirically validate the intended regime of the method:
tuned Kalman filters remain preferable for smooth additive-Gaussian
observations, whereas controlled transport becomes advantageous when the
analysis geometry is non-Gaussian, multimodal, or implicitly defined.
\end{enumerate}

\section{Background and related work}
\label{sec:related}

Consider the discrete-time state-space model
\begin{equation}\label{eq:ssm}
    x_k=\mM_k(x_{k-1};\eta_k)\in\mR^n,\qquad\eta_k\sim\mathcal D_k,
\end{equation}
where \(\mM_k\) is a (possibly stochastic) prediction operator and \(\eta_k\) denotes the model noise distributed according to \(\mathcal D_k\).
At each time step \(k\), we observe the system \(x_k\) through an observation \(y_k\in\mR^m\) satisfying \(y_k\sim p(\cdot\mid x_k)\). We assume \(p(y_k\mid x_k,y_{<k})=p(y_k\mid x_k)\), i.e., the current observation is conditionally independent of previous observations given the current state.
We denote by
\begin{equation}
    \pf_k=\pf_k(x_k)\coloneqq p(x_k\mid y_{<k})\quad\text{and}\quad\pa_k=\pa_k(x_k)\coloneqq p(x_k\mid y_{\le k})
\end{equation}
the forecast and analysis distributions of the system state \(x_k\), respectively, where \(y_{<k}\) and \(y_{\le k}\) collect the observations before and up to step
\(k\). 

Sequential data assimilation seeks to recursively track the analysis distribution \(\pa_k\) by alternating between a prediction step and an analysis step. Specifically, for each step \(k\), the Chapman--Kolmogorov
equation gives
\begin{equation}
    p(x_k\mid y_{<k})=\int q_k^\mM(x_k\mid x_{k-1})p(x_{k-1}\mid y_{<k})\md x_{k-1},
\end{equation}
where \(q_k^\mM\) denotes the transition kernel induced by \(\mM_k\). 
Applying Bayes' rule then yields
\begin{equation}
    p(x_k\mid y_{\le k})=\frac{p(y_k\mid x_k,y_{<k})p(x_k\mid y_{<k})}{p(y_k\mid y_{<k})}=\frac{p(y_k\mid x_k)p(x_k\mid y_{<k})}{p(y_k\mid y_{<k})}.
\end{equation}
The second equality follows from the conditional-independence assumption above.
Consequently, sequential data assimilation can be summarized by the two-step recursion
\begin{equation}
\pf_k=(\mM_k)_\sharp\pa_{k-1}\quad(\text{prediction}),\qquad
\pa_k\propto \ell(\cdot;y_k)\pf_k\quad(\text{analysis}),
\label{eq:recursion}
\end{equation}
where \(\ell(x_k;y_k)\) denotes the likelihood density associated with \(p(y_k\mid x_k)\).

Classical ensemble-based filters realize this recursion by evolving an ensemble of \(N\) particles \(\{x_k^{(j)}\}_{j=1}^N\) over time. In the prediction step, each ensemble member is propagated independently according to
\begin{equation}
    x_k^{\mf,(j)}=\mM_k(x_{k-1}^{\ma,(j)};\eta_k^{(j)}),\quad \eta_k^{(j)}\sim\mathcal D_k.
\end{equation}
If the previous analysis ensemble approximates \(\pa_{k-1}\), independently propagating its members produces an ensemble approximation of \(\pf_k\).
Different ensemble filters mainly differ in how they transform the forecast ensemble members so as to approximate the analysis distribution \(\pa_k\propto \ell(\cdot;y_k)\pf_k\). % Below we briefly review two representative classical filters, which are also treated as the classical baselines in the experimental section.

For an additive-Gaussian likelihood
\begin{equation}
    \ell(x;y_k)=\mN(y_k;\mH(x),R_k),
\end{equation}
in which \(R_k\) is the observation-error covariance, the EnKF approximates the forecast distribution by a Gaussian whose moments match those of the forecast ensemble, and then shifts each member by the empirical Kalman gain, formulated as
\begin{equation}
x_k^{\ma,(j)}=x_k^{\mf,(j)}+P_{xy}^\mf\bigl(P_{yy}^\mf+R_k\bigr)^{-1}\bigl[y_k^{(j)}-\mH(x_k^{\mf,(j)})\bigr],\qquad y_k^{(j)}\sim\mN(y_k,R_k).
\label{eq:enkf}
\end{equation}
The matrices \(P^\mf_{xy},P^\mf_{yy}\) denote the cross-covariance and predicted observation covariance estimated from the ensembles, respectively. The EnKF recovers the exact analysis distribution as \(N\to\infty\) when the forecast is Gaussian and \(\mH\) is linear; finite \(N\) introduces covariance-estimation and perturbed-observation sampling errors. Outside the linear--Gaussian setting, an update determined by empirical first and second moments need not reproduce a non-Gaussian or multimodal analysis distribution.

The bootstrap particle filter instead preserves the empirical forecast measure and reweights it by the likelihood as
\begin{equation}
w_k^{(j)}=\frac{\ell(x_k^{\mf,(j)};y_k)}{\sum_{j'=1}^N \ell(x_k^{\mf,(j')};y_k)},\qquad
\tildepa_k\coloneqq\sum_{j=1}^N w_k^{(j)}\delta_{x_k^{\mf,(j)}}.
\label{eq:pf}
\end{equation}
The analysis ensemble \(\{x_k^{\ma,(j)}\}_{j=1}^N\) is then obtained by resampling from \(\tildepa_k\). For a fixed assimilation horizon, the bootstrap particle approximation converges to the filtering distribution as \(N\to\infty\) under the usual sequential-Monte-Carlo integrability assumptions, but it faces the curse of dimensionality and suffers from severe degeneracy in high-dimensional settings \citep{vanLeeuwen2009PF,snyder2008obstacles}.

{Flow-based filters construct stochastic or deterministic transports whose terminal laws approximate the analysis distribution. EnSF constructs a score-based diffusion transport \citep{bao2024ensf}, whereas EnFF constructs a flow-matching transport \citep{enff}. Related work extends score- and flow-based assimilation to partially observed systems, learned latent dynamics, and stochastic-interpolant models \citep{xiong2026ensflr,xiao2024ldensf,chen2025flowdas}. EnSF evaluates \(\nabla_z\log p(y_k\mid z)\) along the diffusion path, whereas EnFF uses either likelihood-reweighted Monte-Carlo guidance or a localized likelihood-gradient approximation.}
Data-driven methods address unknown or unresolved dynamics by learning or replacing forecast-model components from training data \citep{Brajard2021,ghosh2024danse,Penny2022,Li2025SOAD}. Other methods perform assimilation in learned reduced-order spaces \citep{Peyron2021,Amendola2021,cheng2023generalised,Li2024LAINR}.

The comparison above exposes three limitations of existing analysis interfaces. The EnKF reduces the update to empirical first and second moments and can therefore miss non-Gaussian or multimodal analysis distributions. The bootstrap particle filter is asymptotically consistent, but its observation-likelihood weights can degenerate in high dimensions. EnSF and EnFF avoid explicit likelihood-weighted resampling, yet they require an evaluable likelihood or likelihood gradient to prescribe observation-dependent guidance along the transport path. The data-driven and reduced-order methods above primarily address unknown or expensive dynamics and representation cost; here we instead assume access to forecast samples and isolate the analysis update. We seek an update that does not impose a Gaussian approximation, does not weight and resample forecast particles by the observation likelihood, and separates the specification of the observation from the transport mechanism. The next section defines the target energy-based analysis density \(\pa_k(x)\propto\pf_k(x)\exp[-J_k(x;y_k)]\) and constructs a controlled-flow approximation that learns the control by adjoint matching from terminal energy gradients rather than prescribing a likelihood-based guidance rule at every transport-time step.

% The discussion above isolates the obstruction that organizes the method. The analysis should be
% specified without committing to residuals, likelihood scores along a path, or Gaussian moments, and then
% realized by a transport that only asks for a tractable energy interface. The next section separates these
% two choices, first the energy-tilted analysis law, then the controlled diffusion and adjoint-matching
% approximation used to sample it.

\section{The ensemble controlled-flow filter}
\label{sec:method}

In this section, we formulate the analysis step as a controlled transport from the
forecast law to an energy-tilted analysis law. We begin by introducing implicit data
assimilation, in which the observation enters through a scalar energy rather than an
additive-Gaussian residual or a prescribed analytic representation of the likelihood.
We then cast the resulting analysis update as a stochastic optimal-control problem
and use this formulation to place several existing flow-based filters within a common
controlled-flow perspective~\citep{gao2026terminally}. Building on this viewpoint, we develop the ensemble
controlled-flow filter, which approximates the optimal feedback control through
adjoint matching. Finally, when an analytic observation energy is unavailable but
forward simulation of the observation mechanism is possible, we introduce the
likelihood-free variant EnCF-LF, which learns a surrogate energy from simulator samples
and uses the same controlled-flow solver.

\subsection{Implicit data assimilation via energy tilts}
\label{sec:ida}

Recall that the classical analysis step in \cref{eq:recursion} reweights the forecast
law by the likelihood \(\ell(\cdot;y_k)\). Particular numerical filters often require
more restrictive structures or computational interfaces, such as an additive-Gaussian
residual, a moment-based closure, or the likelihood score along a transport path.
We instead use a scalar energy as the computational interface between the observation
model and the analysis solver.

\begin{definition}[Implicit data assimilation]
\label{def:ida}
Let \(\pf_k\) be the forecast density and let
\(J_k(\cdot;y_k):\mR^n\rightarrow\mR\) be a measurable observation energy satisfying
\begin{equation}
0<
\mathcal Z_k(y_k)
\coloneqq
\mbE_{\pf_k}\left[\me^{-J_k(\cdot;y_k)}\right]
=
\int_{\mR^n}
\exp\left(-J_k(x;y_k)\right)\pf_k(x)\,\md x
<\infty.
\label{eq:energy-normalizer}
\end{equation}
The corresponding implicit data-assimilation analysis is the probability density
\begin{equation}
\pa_k(x)
=
\mathcal Z_k(y_k)^{-1}
\pf_k(x)\exp\left(-J_k(x;y_k)\right)
\propto
\pf_k(x)\exp\left(-J_k(x;y_k)\right).
\label{eq:ida}
\end{equation}
\end{definition}

The energy \(J_k(x;y_k)\) measures the compatibility between a candidate state \(x\)
and the observation \(y_k\), and is defined up to the addition of a term independent
of \(x\). It accommodates several observation interfaces.

When an evaluable likelihood is available, taking
\begin{equation}
J_k(x;y_k)=-\log \ell(x;y_k)
\label{eq:likelihood-energy}
\end{equation}
recovers the standard Bayesian analysis. This includes non-Gaussian and many-to-one
observation models and is not restricted to the additive-Gaussian setting.

The observation relation may instead be specified implicitly through
\(\mathcal R_k(x,y_k)\approx0\), without requiring an explicit forward representation
\(y_k=\mH(x_k)+\varepsilon_k\). A soft enforcement of this relation is obtained from
\begin{equation}
J_k(x;y_k)
=
\frac{\norm{\mathcal R_k(x,y_k)}^2}
{2\sigma_{\mathrm{tol}}^2},
\label{eq:constraint-energy}
\end{equation}
where \(\sigma_{\mathrm{tol}}>0\) determines the tolerance assigned to violations of
the relation.

More generally, any strictly positive compatibility potential
\(\Psi_k(x;y_k)\) can be represented by
\begin{equation}
J_k(x;y_k)=-\log\Psi_k(x;y_k),
\end{equation}
provided that \cref{eq:energy-normalizer} holds. In this form, \cref{eq:ida} defines
a Gibbs-type measure with the forecast law as its reference measure, \(J_k\) as the
energy, and \(\mathcal Z_k\) as the partition function
\citep{georgii2011gibbs}. When \(\Psi_k\) is the likelihood, this measure coincides
with the Bayesian posterior. A general compatibility potential instead defines an
energy-based analysis that need not correspond to a normalized observation law.

When the observation mechanism is accessible only through forward simulation, a
differentiable surrogate energy \(E_\phi(x;y_k)\) may be learned from simulated
state--observation pairs and substituted for \(J_k\). This construction is developed
in \cref{sec:lf}.

In all these cases, the formulation reweights the full forecast law without selecting
a particular inverse branch. It can therefore represent multiple well-separated state
configurations that are compatible with the same observation.
\subsection{The optimal controlled flow}
\label{sec:cflow}

To realize the energy-tilted analysis law in \cref{eq:ida}, we introduce an auxiliary
transport time \(\tau\in[0,1]\) within each assimilation cycle \(k\). Let
\(\mbP\) denote the path law of the base diffusion
\begin{equation}
\md Z_\tau
=
f(Z_\tau,\tau)\md\tau+\sigma(\tau)\md W_\tau,
\qquad
Z_0\sim\mu_0,
\quad
\tau\in[0,1],
\label{eq:osde}
\end{equation}
whose terminal law is the forecast law,
\(\mathrm{Law}_{\mbP}(Z_1)=\pf_k\). The base drift generally depends on the
forecast law at cycle \(k\), although this dependence is suppressed in the notation.
We then steer the base flow toward the analysis law \(\pa_k\) through an
observation-dependent drift \(u\),
\begin{equation}
\md Z_\tau
=
\left[
f(Z_\tau,\tau)+u(Z_\tau,\tau;y_k)
\right]\md\tau
+\sigma(\tau)\md W_\tau,
\qquad
Z_0\sim\mu_0,
\quad
\tau\in[0,1].
\label{eq:csde}
\end{equation}
We denote the path law of \cref{eq:csde} by \(\mbQ^u\). This representation also
includes deterministic flows when \(\sigma=0\). For the stochastic optimal-control
construction below, we assume that \(\sigma(\tau)>0\).

The control is chosen by minimizing the free-energy objective
\begin{equation}
\mathcal J_k(u)
\coloneqq
\mbE_{\mbQ^u}
\left[
\frac12
\int_0^1
\norm{\sigma(\tau)^{-1}u(Z_\tau,\tau;y_k)}^2
\md\tau
+
J_k(Z_1;y_k)
\right].
\label{eq:soc}
\end{equation}
The running term penalizes deviation from the base path law, whereas the terminal
term favors states compatible with the observation. Under the corresponding
Girsanov change of measure, the running cost equals
\(\KLdiv{\mbQ^u}{\mbP}\) for path laws sharing the initial distribution
\(\mu_0\).

\begin{assumption}[Regularity]
\label{ass:reg}
The base drift \(f(\cdot,\tau)\) is measurable in \(\tau\), locally Lipschitz in
\(z\), and of at most linear growth, and the noise schedule \(\sigma\) is continuous
and strictly positive. An admissible control \(u\) is progressively measurable,
makes \cref{eq:csde} well posed, and satisfies
\begin{equation}
\mbE_{\mbQ^u}
\int_0^1
\norm{\sigma(\tau)^{-1}u(Z_\tau,\tau;y_k)}^2
\md\tau
<\infty,
\end{equation}
together with the conditions required for the Girsanov change of measure. Moreover,
\begin{equation}
h(z,\tau;y_k)
\coloneqq
\mbE_{\mbP}
\left[
\me^{-J_k(Z_1;y_k)}
\mid Z_\tau=z
\right]
\label{eq:h-function}
\end{equation}
is strictly positive and continuously differentiable in \(z\) for \(\tau<1\).
\end{assumption}

These are standard conditions for the Doob \(h\)-transform associated with a
terminal energy \citep{follmer1988,leonard2014survey}. For the VP construction
below, the Gaussian transition kernel provides interior regularization under suitable
integrability conditions.

\begin{proposition}[Optimal control of the analysis step, \cite{follmer1988,dai1991stochastic,boue1998variational}]
\label{prop:optctrl}
Under \cref{ass:reg}, the minimizer of \cref{eq:soc} is the Markov control
\begin{equation}
u^\star(z,\tau;y_k)
=
\sigma^2(\tau)\nabla_z\log h(z,\tau;y_k)
=
\sigma^2(\tau)\nabla_z\log
\mbE_{\mbP}
\left[
\me^{-J_k(Z_1;y_k)}
\,\middle|\,
Z_\tau=z
\right].
\label{eq:ustar}
\end{equation}
If the base flow is memoryless 
\citep{domingoenrich2025adjoint}, the
controlled terminal law satisfies
\begin{equation}
\mathrm{Law}_{\mbQ^\star}(Z_1)=\pa_k.
\label{eq:optimal-terminal}
\end{equation}
\end{proposition}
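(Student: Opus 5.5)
The plan is to prove both claims through the Gibbs variational principle for the path-space free energy. Write $\Phi(\omega)=J_k(Z_1(\omega);y_k)$ as a functional of the path. By the Girsanov identification noted after \cref{eq:soc}, every admissible $u$ satisfies
\begin{equation*}
\mathcal J_k(u)=\KLdiv{\mbQ^u}{\mbP}+\mbE_{\mbQ^u}[\Phi].
\end{equation*}
Define the tilted path measure $\md\mbQ^\star=\mathcal Z^{-1}\me^{-\Phi}\md\mbP$, where $\mathcal Z=\mbE_{\mbP}[\me^{-\Phi}]$. This quantity is finite and positive by \cref{eq:energy-normalizer}, since $\mathrm{Law}_{\mbP}(Z_1)=\pf_k$, so $\mathcal Z=\mathcal Z_k(y_k)$. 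A direct computation then gives
\begin{equation*}
\mathcal J_k(u)=\KLdiv{\mbQ^u}{\mbQ^\star}-\log\mathcal Z\ \ge\ -\log\mathcal Z,
\end{equation*}
with equality if and only if $\mbQ^u=\mbQ^\star$. The minimization therefore reduces to showing that $\mbQ^\star$ is the law of \cref{eq:csde} for the specific Markov drift $u^\star$. Uniqueness of the minimizer then follows from strict positivity of KL together with uniqueness of the drift determined by a path law (the $\md\tau\otimes\mbQ$-a.e.\ identification from Girsanov).

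To identify the drift, I will compute the density process $M_\tau=\mbE_{\mbP}[\md\mbQ^\star/\md\mbP\mid\mathcal F_\tau]$. The Markov property of the base diffusion gives $M_\tau=\mathcal Z^{-1}h(Z_\tau,\tau;y_k)$, a positive $\mbP$-martingale. Under \cref{ass:reg}, $h$ is $C^1$ in $z$ for $\tau<1$, and it satisfies the backward Kolmogorov equation (at least in a weak or mild sense). Itô's formula on $\log h$, or equivalently the martingale representation, then yields $\md M_\tau=M_\tau\,\sigma(\tau)\nabla_z\log h(Z_\tau,\tau)\cdot\md W_\tau$. Girsanov's theorem therefore shows that under $\mbQ^\star$ the process $W_\tau-\int_0^\tau\sigma\nabla\log h\,\md s$ is a Brownian motion. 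Consequently $Z$ solves \cref{eq:csde} with $u^\star=\sigma^2\nabla_z\log h$, which is \cref{eq:ustar}.

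I expect this step to be the main technical obstacle. The difficulty is justifying that $M$ is a true martingale up to $\tau=1$ and that $u^\star$ is admissible, meaning it has finite energy. Finiteness of the energy comes for free from $\KLdiv{\mbQ^\star}{\mbP}=-\log\mathcal Z-\mbE_{\mbQ^\star}[\Phi]<\infty$. For this I would need $\mbE_{\mbQ^\star}[\abs{\Phi}]<\infty$, or else I would pass through a localization in $\tau\uparrow1$ with stopping times and monotone convergence. I would invoke the cited Föllmer/Dai Pra/Boué–Dupuis results for exactly these details rather than redo them.

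For the terminal law, disintegrate $\mbQ^\star$ over $(Z_0,Z_1)$. From the definition, the joint law is $\mathrm{Law}_{\mbQ^\star}(Z_0,Z_1)(\md z_0,\md z_1)=\mathcal Z^{-1}\me^{-J_k(z_1;y_k)}\,\mbP_{01}(\md z_0,\md z_1)$. Memorylessness means $Z_0$ and $Z_1$ are independent under $\mbP$, so $\mbP_{01}=\mu_0\otimes\pf_k$. Integrating out $z_0$ then gives $\mathrm{Law}_{\mbQ^\star}(Z_1)\propto\me^{-J_k}\pf_k=\pa_k$, which is \cref{eq:optimal-terminal}. Without memorylessness, the $Z_0$-marginal would be tilted by $h(z_0,0)$ instead, and the terminal marginal would be biased. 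This explains why the hypothesis is needed for the second claim but not for the first.
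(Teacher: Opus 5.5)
Your argument has a genuine gap in the first claim. The proposition asserts that claim under \cref{ass:reg} alone, without memorylessness.

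Every admissible law $\mbQ^u$ starts from $Z_0\sim\mu_0$. Your globally normalized tilt $\md\mbQ^\star=\mathcal Z^{-1}\me^{-\Phi}\md\mbP$ instead has initial marginal $\mathcal Z^{-1}h(z_0,0;y_k)\,\mu_0(\md z_0)$. Your own density process shows this, since $M_0=\mathcal Z^{-1}h(Z_0,0;y_k)\not\equiv1$.

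Unless $h(\cdot,0;y_k)$ is $\mu_0$-a.e.\ constant, three things follow:
\begin{itemize}
\item $\mbQ^\star$ is not the law of \cref{eq:csde} for any drift;
\item the bound $\mathcal J_k(u)\ge-\log\mathcal Z$ is never attained, because by strict Jensen the true minimum $-\mbE_{\mu_0}[\log h(Z_0,0;y_k)]$ is larger;
\item your reduction ``it suffices to show that $\mbQ^\star$ is the law of \cref{eq:csde} with drift $u^\star$'' is false.
\end{itemize}
Girsanov with $M_0\neq1$ only tells you that $\mbQ^\star$ is the $u^\star$-controlled process started from a \emph{tilted} initial law.

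The missing idea is to normalize the tilt conditionally on the starting point. This is exactly the content of the paper's remark after the proposition, that the normalization of the terminal tilt depends on $Z_0$; the paper itself gives no proof and defers to the cited works. Set
\begin{equation*}
\frac{\md\widetilde{\mbQ}}{\md\mbP}=\frac{\me^{-J_k(Z_1;y_k)}}{h(Z_0,0;y_k)},
\qquad
\mathcal J_k(u)=\KLdiv{\mbQ^u}{\widetilde{\mbQ}}-\mbE_{\mu_0}\bigl[\log h(Z_0,0;y_k)\bigr].
\end{equation*}
Equivalently, apply your Gibbs principle to $\mbP(\cdot\mid Z_0=z_0)$ for each $z_0$ and integrate against $\mu_0$. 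Now $\widetilde{\mbQ}$ has initial law $\mu_0$, and its density process $h(Z_\tau,\tau;y_k)/h(Z_0,0;y_k)$ starts at $1$. Your It\^o--Girsanov computation then identifies $\widetilde{\mbQ}=\mbQ^{u^\star}$ with no memorylessness needed.

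Your last paragraph also places memorylessness in the wrong step. Integrating $z_0$ out of $\mathcal Z^{-1}\me^{-J_k(z_1;y_k)}\mbP_{01}(\md z_0,\md z_1)$ gives $\mathcal Z^{-1}\me^{-J_k}\pf_k=\pa_k$ for \emph{any} base flow. So independence is never used there, and the terminal marginal of $\mbQ^\star$ is always exact. What can be biased is the terminal marginal of the process actually started from $\mu_0$, namely
\begin{equation*}
\int\mu_0(\md z_0)\,\frac{\me^{-J_k(z_1;y_k)}\,\mbP_{1\mid0}(\md z_1\mid z_0)}{h(z_0,0;y_k)}.
\end{equation*}
Memorylessness gives $\mbP_{1\mid0}(\cdot\mid z_0)=\pf_k$, hence $h(z_0,0;y_k)=\mathcal Z$ for $\mu_0$-a.e.\ $z_0$. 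Then $\widetilde{\mbQ}=\mbQ^\star$ and this marginal equals $\pa_k$; that is where the hypothesis enters.

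With this correction, your variational route proves both claims, given the integrability and true-martingale details you already plan to import from the references. As written, it proves the first claim only in the memoryless case.
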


The memoryless condition is essential because otherwise the normalization of the
terminal tilt generally depends on \(Z_0\), and the controlled terminal law need not
coincide with the global analysis law in \cref{eq:ida}.

\begin{definition}[Controlled-flow filter]
\label{def:cff}
At assimilation cycle \(k\), a controlled-flow analysis is specified by
\((\mu_0,f,\sigma,u)\), where the uncontrolled flow has terminal law \(\pf_k\)
and the observation-dependent drift \(u\) defines the guided dynamics
\cref{eq:csde}. Samples from its terminal law form the analysis ensemble.
The controlled-flow analysis is exact if
\(\mathrm{Law}(Z_1)=\pa_k\).
A controlled-flow filter applies such an analysis recursively after each forecast
step.
\end{definition}

Several existing flow-based ensemble filters admit the decomposition in
\cref{eq:csde}, including the score-based EnSF
\citep{bao2024ensf}, the flow-matching EnFF \citep{enff}, and the
Schr\"odinger-bridge EnSBF \citep{Sun2025EnSBF}. They differ in their initial
law, base dynamics, and observation-dependent drift. This common representation
does not imply that all these methods solve the stochastic optimal-control problem
\cref{eq:soc}. Rather, it isolates how each method constructs its forecast transport
and incorporates the observation. The resulting decompositions are summarized in
\cref{tab:families}.%, with derivations deferred to \cref{sec:existing-methods}.

\begin{table}[t]
\centering
\renewcommand{\arraystretch}{1.2}%
\resizebox{\textwidth}{!}{%
\begin{tabular}{@{}llllll@{}}
\toprule
family & \(\mu_0\) & \(\sigma\) & base drift \(f(z,\tau)\) & control \(u(z,\tau;y_k)\) & requirements \\
\midrule
EnSF \citep{bao2024ensf} & \(\mN(0,I_n)\) & \cref{eq:memoryless} & \cref{eq:EnCF-f}  & \(\sigma^2(\tau)\lambda(\tau)\nabla J_k\) & \(\nabla J_k\) at each step \(Z_\tau^{(j)}\)\\
EnFF-OT \citep{enff} & \(\mN(0,I_n)\) & 0 & \(\sum_{j=1}^N w_j\frac{Z_1^{(j)}-(1-\sigma_{\min})z}{1-(1-\sigma_{\min})\tau}\) & \(\lambda\nabla J_k\) & \(\nabla J_k\) at each step \(Z_\tau^{(j)}\)\\
EnFF-F2P \citep{enff} & \(\pa_{k-1}\) & 0 & \(\sum_{j=1}^N w_j(Z_1^{(j)}-Z_0^{(j)})\) & \(\lambda\nabla J_k\) & \(\nabla J_k\) at each step \(Z_\tau^{(j)}\)\\
EnSBF \cite{Sun2025EnSBF} & \(\delta_0\) & \(\sigma_0\)  & \(\sum_{j=1}^N w_j\frac{Z_1^{(j)}-z}{1-\tau}\) &
\(\sum_{j=1}^N w_j'\frac{Z_1^{(j)}-z}{1-\tau}\) & \(J_k\) at each step \(Z_\tau^{(j)}\)\\
\midrule
\textbf{EnCF (ours)} & \(\mN(0,I_n)\) & \cref{eq:memoryless} & \cref{eq:EnCF-f} & $\sigma^2(\tau)a_\theta$ & \(\nabla J_k\) at terminal \(Z_1^{(j)}\)\\
\textbf{EnCF-LF (ours)} & \(\mN(0,I_n)\) & \cref{eq:memoryless} & \cref{eq:EnCF-f} & $\sigma^2(\tau)a_\theta$ & \(y_k\sim p(\cdot\mid x_k)\) sampler\\
\bottomrule
\end{tabular}}
\caption{Flow-based ensemble filters expressed through the common controlled-flow
decomposition \cref{eq:csde}. Only EnCF and EnCF-LF construct the observation-dependent
drift by solving \cref{eq:soc} through adjoint matching.}
\label{tab:families}
\end{table}

\subsection{The adjoint-matching control}
\label{sec:am}
We now construct EnCF by specifying a memoryless base flow and learning its
observation-dependent control. Let \(\alpha_\tau\) and \(\beta_\tau\) be interpolation
coefficients satisfying \((\alpha_0,\beta_0)=(0,1)\) and \((\alpha_1,\beta_1)=(1,0)\)
so that the Gaussian kernel \(q_{\tau\mid0}(z\mid x)=\mN(z;\alpha_\tau x,\beta_\tau^2 I_n)\)
interpolates from the reference law
\(\mu_0=\mN(0,I_n)\) to a point mass at \(x\). The associated reverse
variance-preserving (VP) diffusion \citep{song2021sde,Vincent2011} has coefficients
\begin{equation}
b(\tau)
=
\frac{\dot\alpha_\tau}{\alpha_\tau},
\qquad
\sigma^2(\tau)
=
2b(\tau)\beta_\tau^2
-
\frac{\md\beta_\tau^2}{\md\tau}
=
\frac{
2\beta_\tau
\bigl(
\dot\alpha_\tau\beta_\tau
-
\dot\beta_\tau\alpha_\tau
\bigr)
}{
\alpha_\tau
}.
\label{eq:memoryless}
\end{equation}
This choice makes the endpoints of the base process independent,
\(Z_0\perp Z_1\), and hence satisfies the memoryless condition required in
\cref{prop:optctrl}.

At assimilation cycle \(k\), let
\(\{\hat Z_k^{(j)}\}_{j=1}^N\) denote the forecast ensemble approximating
\(\pf_k\). Then we may obtain an empirical base drift by Monte-Carlo approximation as
\begin{equation}
\hat f_k(z,\tau)
=
b(\tau)z
+
\sigma^2(\tau)
\sum_{j=1}^N
w_j(z,\tau)
\frac{
\alpha_\tau\hat Z_k^{(j)}-z
}{
\beta_\tau^2
},\quad w_j(z,\tau)=\frac{q_{\tau\mid0}(z\mid\hat Z_k^{(j)})}{\sum_{j'=1}^Nq_{\tau\mid0}(z\mid\hat Z_k^{(j')})}.
\label{eq:EnCF-f}
\end{equation}
The corresponding population drift transports \(\mu_0\) to \(\pf_k\), while
\cref{eq:EnCF-f} replaces the forecast law by its ensemble approximation~\citep{gao2024flow}.
% The reverse-time derivation and implementation details are provided in \cref{app:base}.

With the base flow fixed, it remains to approximate the optimal control in
\cref{eq:ustar}. We parameterize the observation-dependent drift as
\begin{equation}
u_\theta(z,\tau;y_k)
=
\sigma^2(\tau)a_\theta(z,\tau;y_k),
\label{eq:control}
\end{equation}
where \(a_\theta\) approximates the logarithmic gradient in
\cref{eq:ustar}. In the experiments, \(a_\theta\) is represented by a
convolutional neural network, although the construction is independent of this
choice.

Direct minimization of \(\mathcal J_k(u_\theta)\) in \cref{eq:soc} requires
differentiating through the sampled controlled trajectories. Adjoint matching \citep{domingoenrich2025adjoint,domingoenrich2026smp}
instead constructs a least-squares target by solving a lean adjoint equation
backward along each sampled path
\(Z\sim\mbQ^{u_\theta}\), with the path treated as stop-gradient. Because the optimal terminal control in \cref{eq:ustar} is known explicitly at \(\tau=1\) as \(u^\star(z,1;y_k)=-\sigma^2(1)\nabla_z J_k(z;y_k)\), we start from this terminal information and define the lean adjoint by
\begin{equation}
\widetilde a(1;Z)
=
\nabla_z J_k(Z_1;y_k),
\qquad
\frac{\md\widetilde a(\tau;Z)}{\md\tau}
=
-
\nabla_z f(Z_\tau,\tau)^\trans
\widetilde a(\tau;Z).
\label{eq:adjoint}
\end{equation}
The control is then trained using the adjoint-matching objective
\begin{equation}
\mathcal L_{\mathrm{AM}}(\theta;Z)
=
\frac12
\int_0^1
\sigma^2(\tau)
\norm{
a_\theta(Z_\tau,\tau;y_k)
+
\widetilde a(\tau;Z)
}^2
\md\tau.
\label{eq:am}
\end{equation}
Thus \(a_\theta\) is regressed onto
\(-\widetilde a\), or equivalently, the controlled drift \(u_\theta\) is
regressed onto \(-\sigma^2\widetilde a\). The observation enters the backward equation
only through the terminal seed
\(\nabla_zJ_k(Z_1;y_k)\). Moreover, the stop-gradient construction avoids
retaining an automatic-differentiation graph through the complete controlled
rollout. 
% Details of the paired-sample implementation are given in
% \cref{app:am-impl}.
% \todo{fix this}
\begin{proposition}[Optimality of adjoint matching
{\cite{domingoenrich2025adjoint,domingoenrich2026smp}}]
\label{prop:am}
Suppose that the base drift is exact and the diffusion coefficient depends only
on time. Under the regularity and uniqueness conditions of
\cite{domingoenrich2025adjoint,domingoenrich2026smp}, the optimal control
\(u^\star\) in \cref{prop:optctrl} is the unique critical point, in the
nonparametric control space, of the expected on-policy adjoint-matching objective \(\mbE_{Z\sim\mbQ^{u_\theta}}\left[\mathcal L_{\mathrm{AM}}(\theta;Z)\right]\),
where the sampled trajectories and adjoint targets are treated as stop-gradient.
\end{proposition}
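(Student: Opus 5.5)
We would follow the argument of \cite{domingoenrich2025adjoint} in two steps: first identify the critical points of the expected on-policy objective in the nonparametric control space, then show that the only control satisfying the resulting condition is \(u^\star\) from \cref{prop:optctrl}. Fix a control \(u=\sigma^2 a\) and let \(Z\sim\mbQ^{u}\). Since trajectories and adjoint targets are stop-gradient, the first variation of \(\mbE_{\mbQ^{u}}[\mathcal L_{\mathrm{AM}}]\) with respect to \(a\) in a direction \(\delta a\) is
\begin{equation*}
\mbE_{\mbQ^{u}}\int_0^1\sigma^2(\tau)\,\bigl\langle a(Z_\tau,\tau;y_k)+\widetilde a(\tau;Z),\,\delta a(Z_\tau,\tau)\bigr\rangle\,\md\tau .
\end{equation*}
This vanishes for all bounded measurable \(\delta a\) if and only if \(a(z,\tau;y_k)=-\mbE_{\mbQ^{u}}[\widetilde a(\tau;Z)\mid Z_\tau=z]\) for \(\tau\)-a.e.\ and \(\mathrm{Law}_{\mbQ^u}(Z_\tau)\)-a.e.\ \(z\). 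Here we use \(\sigma>0\) from \cref{ass:reg}. Thus critical points are exactly the fixed points of the map \(u\mapsto -\sigma^2\,\mbE_{\mbQ^u}[\widetilde a(\tau;Z)\mid Z_\tau=\cdot\,]\).

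\textbf{Step 1: \(u^\star\) is a fixed point.} Set \(V(z,\tau)=-\log h(z,\tau;y_k)\), with \(h\) as in \cref{eq:h-function}. Then \(V\) solves the HJB equation associated with \cref{eq:soc}, \(\partial_\tau V+f\cdot\nabla V+\tfrac12\sigma^2\Delta V-\tfrac12\sigma^2\|\nabla V\|^2=0\), with \(V(\cdot,1)=J_k\), and \(u^\star=-\sigma^2\nabla V\). Differentiating the HJB equation in \(z\) and applying It\^o's formula along \(Z\sim\mbQ^{u^\star}\) shows that \(\nabla V(Z_\tau,\tau)\) satisfies a backward SDE: its drift equals \(-\nabla f^\trans\nabla V\) plus a term \(-\nabla u^{\star\trans}\nabla V\), together with a martingale part.

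The lean adjoint in \cref{eq:adjoint} drops the control-Jacobian term. This is where the hypothesis that \(\sigma\) depends only on time enters. At the optimum, \(\nabla u^{\star\trans}\nabla V=-\sigma^2\nabla^2V\,\nabla V=-\tfrac{\sigma^2}{2}\nabla\|\nabla V\|^2\). This term cancels against the quadratic term coming from the HJB gradient, so \(\nabla V(Z_\tau,\tau)+\widetilde a(\tau;Z)\) is a martingale increment. Conditioning on \(Z_\tau\) and using the terminal identity \(\widetilde a(1)=\nabla J_k(Z_1)=\nabla V(Z_1,1)\) gives \(\mbE[\widetilde a(\tau;Z)\mid Z_\tau]=\nabla V(Z_\tau,\tau)\). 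Hence \(a=-\nabla V\) is a fixed point. If \(\sigma\) depended on \(z\), extra Itô terms involving \(\nabla\sigma\) would appear and this cancellation would fail.

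\textbf{Step 2: uniqueness.} Let \(u\) be any fixed point. We would show that \(a\) is then a gradient field, namely \(a=-\nabla\widetilde V\) with \(\widetilde V(z,\tau)=\mbE_{\mbQ^u}[\,\cdots]\), and that \(\widetilde V\) solves the same HJB equation with the same terminal data. Well-posedness, i.e.\ uniqueness of classical solutions of that HJB equation under the regularity hypotheses cited from \cite{domingoenrich2025adjoint,domingoenrich2026smp}, then forces \(u=u^\star\).

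We expect Step 2 to be the main obstacle, because a fixed point is a priori only a self-consistency condition and need not be a gradient. The first approach we would try is the one in \cite{domingoenrich2025adjoint}: show that the fixed-point condition coincides with the stationarity condition of the original objective \cref{eq:soc}, by relating the basic adjoint to the lean adjoint at a fixed point. Then invoke uniqueness of the minimizer from \cref{prop:optctrl} together with strict convexity of \cref{eq:soc} in the path measure, since it is a KL divergence plus a linear term. The assumption that the base drift is exact matters here: with an inexact drift the terminal law would not be \(\pf_k\), and the fixed point would target the wrong tilt.
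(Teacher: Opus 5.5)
The paper gives no proof of its own here; it imports the result from the adjoint-matching papers, and your outline follows their general shape. Your Step 1 is sound in substance. Along $\mbQ^{u^\star}$, the It\^o drift of $\nabla V(Z_\tau,\tau)$ reduces to $-\nabla f^\trans\nabla V$, because the control-Jacobian contribution $-\sigma^2\nabla^2V\nabla V$ cancels the differentiated quadratic HJB term. It needs two repairs. First, the quantity with zero conditional mean is $\nabla V-\widetilde a$, not $\nabla V+\widetilde a$. Second, $\nabla f(Z_\tau,\tau)$ is path dependent, so you should run the martingale argument on $G_\tau^\trans\nabla V(Z_\tau,\tau)$, where $\dot G_\tau=\nabla f(Z_\tau,\tau)G_\tau$ and $G_0=I$. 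This is a local martingale with terminal value $G_1^\trans\nabla J_k(Z_1)=G_\tau^\trans\widetilde a(\tau)$.

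However, Step 1 only shows that $u^\star$ is \emph{a} critical point. Uniqueness is the actual content of the proposition, and your Step 2 does not establish it. The function $\widetilde V$ is left as $\mbE_{\mbQ^u}[\cdots]$, and the lean/basic adjoint relation is only named. The closing inference is also invalid. Strict convexity of $\KLdiv{\mbQ}{\mbP}+\mbE_{\mbQ}[J_k(Z_1)]$ in the path measure, together with uniqueness of its minimizer, does not exclude other critical points of $u\mapsto\mathcal J_k(u)$. The map $u\mapsto\mbQ^u$ is nonlinear and Markov-controlled path laws do not form a convex set. So stationarity under Markov perturbations $\delta u$ does not by itself give a vanishing derivative along the segment from $\mbQ^u$ to $\mbQ^\star$.

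What is missing is the following chain. Let $u=\sigma^2a$ be a lean fixed point, and let $a^{\mathrm b}$ be the basic adjoint, $\dot a^{\mathrm b}=-(\nabla f+\nabla u)^\trans a^{\mathrm b}-\sigma^{-2}\nabla u^\trans u$ with $a^{\mathrm b}(1)=\nabla J_k(Z_1)$.

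First, the difference $\delta=a^{\mathrm b}-\widetilde a$ solves $\dot\delta=-(\nabla f+\nabla u)^\trans\delta-\nabla u^\trans(\widetilde a+\sigma^{-2}u)$ with $\delta(1)=0$. The propagator from $s$ back to $\tau$ and $\nabla u(Z_s,s)$ are $\mathcal F_s$-measurable. Meanwhile $\mbE[\widetilde a(s)+\sigma^{-2}u(Z_s,s)\mid\mathcal F_s]=0$, by the fixed-point condition and the Markov property, since $\widetilde a(s)$ depends only on $(Z_r)_{r\ge s}$. Hence $\mbE[\delta(\tau)\mid Z_\tau]=0$, and $u$ is also a basic-adjoint fixed point.

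Next, $\mbE[a^{\mathrm b}(\tau)\mid Z_\tau=z]=\nabla_z\mathcal V^u(z,\tau)$, where $\mathcal V^u(z,\tau)=\mbE_{\mbQ^u}[\int_\tau^1\frac12\norm{\sigma^{-1}u}^2\md s+J_k(Z_1)\mid Z_\tau=z]$ is the cost-to-go of $u$; this is the function your $\widetilde V$ should be. Substitute $u=-\sigma^2\nabla\mathcal V^u$ into the backward Kolmogorov equation $\partial_\tau\mathcal V^u+(f+u)\cdot\nabla\mathcal V^u+\frac12\sigma^2\Delta\mathcal V^u+\frac12\sigma^{-2}\norm{u}^2=0$. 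This yields exactly your HJB equation with terminal data $J_k$, so HJB uniqueness gives $\mathcal V^u=V$ and hence $u=u^\star$.

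Alternatively, once $u=-\sigma^2\nabla\mathcal V^u$, the performance-difference identity gives $\mathcal J_k(u')-\mathcal J_k(u)=\frac12\mbE_{\mbQ^{u'}}\int_0^1\sigma^{-2}\norm{u'-u}^2\md\tau\ge0$ for every admissible $u'$. So $u$ is a global minimizer, and only at that point does uniqueness of the minimizer from \cref{prop:optctrl} apply. What rules out spurious critical points is identifying $\mbE[\widetilde a\mid Z_\tau]$ with the gradient of the policy's own cost-to-go, not convexity in $\mbQ$.
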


\begin{algorithm}[t]
\caption{Ensemble Controlled-flow Filtering (\textbf{EnCF}/\textbf{EnCF-LF}).
}
\label{alg:encf}
\begin{algorithmic}[1]
\Require dynamics \(\{\mathcal M_k\}_k\); controlled-flow parameters \((\mu_0,f,\sigma)\). For EnCF, we also require the terminal energy \(J(\cdot;y_k)\) for each \(k\).
\State Initialize the control net \(a_\theta\) as zero.
\State Set up background ensemble members \(\{x_0^{(j)}\}_{j=1}^N\);\Comment{\(x_0^{(j)}\sim\pa_0\)}
\For{\(k=1,2,\cdots\)}\Comment{Assimilation cycle}
    \State \(\hat Z_k^{(j)}\gets \mM_k(x_{k-1}^{(j)},\eta^{(j)})\) for each \(j\); \Comment{forecast step, \(\hat Z_k^{(j)}\sim\pf_k\)}
    % \Comment{$\mathrm{Law}(Z_k)\approx\pf_k$}
\If{we use the EnCF-LF model}\Comment{EnCF-LF branch}
    \State Sample \(\hat y^{(j)}\sim p(\cdot\mid\hat Z_k^{(j)})\) for each \(j\);
    \State Learn \(E_\phi(x,y)\) with \(\{(\hat Z_k^{(j)},\hat y^{(j)})\}_{j=1}^N\);\Comment{Inner training loop}
    \State Set \(J\gets E_\phi\);
% \Else\Comment{EnCF branch}
%     \State Set $J$ as the ground truth;
\EndIf
    \Repeat%{$e=1,\cdots,E_k$}
      \Comment{Train with adjoint-matching}
    \State Initialize and evolve \(Z_\tau\) via \cref{eq:csde} with \(u_\theta=\sigma^2 a_\theta\) under stop-gradient;
    \State Set \(\tilde a(1)\gets\nabla_z J(Z_1;y_k)\) and integrate \(\dot{\tilde a}=-\tilde a^\trans\nabla f\) backward;
    \State Do an \(N_\mathrm{reg}\)-step optimization to minimize the regression loss
    \[\sum_\tau\sigma(\tau)^2\norm{a_\theta(Z_\tau,\tau;y_k)+\tilde a(\tau)}^2;\]
    \Until{\(E_k\) iterations end;}
    \State Initialize and evolve \(Z_\tau^{(j)}\) via \cref{eq:csde} with \(u_\theta=\sigma^2 a_\theta\) for each \(j\);
    \State \(x_k^{(j)}=Z_1^{(j)}\) for each \(j\).\Comment{analysis step, \(x_k^{(j)}\sim\pa_k\)}
\EndFor
\end{algorithmic}
\end{algorithm}

\subsection{The likelihood-free surrogate energy}\label{sec:lf}
When the observation law is unavailable but observation samples can be generated, we replace \(J_k\) by a
surrogate energy \(E_\phi\). We use two parameterizations according to the
available observation structure.

When a meaningful observation mapping exists, Gaussian-LF learns a pointwise
conditional model with mean \(m_\phi(x)\) and scale \(s_\phi\). Its energy is
\begin{equation}
    E_\phi(x;y)
    =
    \frac{1}{2}
    \norm{\frac{m_\phi(x)-y}{s_\phi}}^2,
\end{equation}
up to terms independent of \(x\). The parameters are fitted online by Gaussian
maximum likelihood on simulator pairs \((x^{(j)},y^{(j)})\). This model is
appropriate for smooth additive observations.
For more general observations such as multimodal or implicit conditional laws, the energy is directly parameterized as a scalar network. The network is trained by noise-contrastive
estimation, using matched simulator pairs as positives and mismatched
observations together with perturbed states as negatives. At each assimilation
cycle, \(E_\phi\) is first trained with the control fixed and is then held fixed
during adjoint matching. In both variants, the controlled-flow solver uses the
surrogate only through the terminal gradient
\(\nabla_x E_\phi(Z_1;y_k)\).

\section{Theory}
\label{sec:theory}
Building on the controlled-flow formulation developed in \cref{sec:method}, this
section establishes structural a priori guarantees for the resulting filtering
scheme along three complementary directions. We first study \emph{consistency} in
the ideal limit, showing that an exact base flow together with the optimal control
recovers the exact filtering distribution. We then characterize \emph{local
accuracy} by decomposing the one-step filtering error into refinable contributions
from the empirical base flow, control approximation, surrogate energy, and time
discretization. Finally, using the perturbation framework for Markov filters
\citep{leglandoudjane2004}, we establish \emph{stability} of the approximate
recursion by showing that, whenever the exact filter is stable, these locally
injected errors remain uniformly controlled rather than accumulating over
assimilation cycles.

\subsection{Consistency in the ideal limit}
\label{sec:consistency}
We first consider the idealized setting in which the forecast propagation, the
base flow, and the control approximation are exact. In this setting, the
controlled-flow construction recovers the target analysis law at every assimilation
cycle.
\begin{theorem}[Consistency of the controlled-flow filter]
\label{thm:consistency}
Suppose that the filter is initialized from the exact analysis law \(\pa_0\). For each cycle \(k\), assume that
\begin{itemize}
    \item the memoryless base flow \cref{eq:osde} transports \(\mu_0\) to \(\pf_k\) exactly.
    \item the adjoint-matching regression \cref{eq:am} reaches its on-policy fixed
    point.
\end{itemize}
Then the controlled flow terminates at \(\mathrm{Law}(Z_1)=\pa_k\) for each \(k\).
\end{theorem}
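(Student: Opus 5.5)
We argue by induction on the cycle index \(k\), with the inductive hypothesis that the analysis ensemble law entering cycle \(k\) equals \(\pa_{k-1}\). The base case is the assumption that the filter starts from \(\pa_0\).

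For the inductive step, suppose the analysis law at cycle \(k-1\) is exactly \(\pa_{k-1}\). The forecast step propagates each member independently through \(\mM_k\). By the Chapman--Kolmogorov identity and the prediction half of \cref{eq:recursion}, the forecast law is \((\mM_k)_\sharp\pa_{k-1}=\pf_k\). By the first hypothesis, the memoryless base flow \cref{eq:osde} then has terminal law \(\pf_k\) exactly. This is the setting of \cref{prop:optctrl} and \cref{prop:am}, whose "exact base drift'' requirement is met.

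Next we identify the learned control with \(u^\star\). The second hypothesis says the adjoint-matching regression \cref{eq:am} sits at its on-policy fixed point. That is, \(u_\theta=\sigma^2a_\theta\) is a critical point of \(\mbE_{Z\sim\mbQ^{u_\theta}}[\mathcal L_{\mathrm{AM}}(\theta;Z)]\) with trajectories and adjoint targets held stop-gradient, understood in the nonparametric control space. By \cref{prop:am}, under its regularity and uniqueness conditions, this critical point is unique and equals the optimal control \(u^\star=\sigma^2\nabla\log h\) of \cref{eq:ustar}. Since the base flow is memoryless (the VP coefficients \cref{eq:memoryless} give \(Z_0\perp Z_1\)), the second part of \cref{prop:optctrl} then yields \(\mathrm{Law}_{\mbQ^\star}(Z_1)\propto\pf_k\,\me^{-J_k(\cdot;y_k)}\). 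This is exactly \(\pa_k\) in \cref{eq:ida}, and the normalizer is finite and positive by \cref{eq:energy-normalizer}. The analysis ensemble therefore has law \(\pa_k\), which closes the induction. When \(J_k=-\log\ell\), this also coincides with the Bayesian filtering distribution \(p(x_k\mid y_{\le k})\).

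The main obstacle is the identification step. \Cref{prop:am} only characterizes \(u^\star\) as a critical point, so we must interpret "fixed point'' as stationarity in the full control class rather than in a finite parameter family. We also have to verify that \cref{ass:reg} holds at every cycle, so that \(h\) is positive and \(C^1\) and the Girsanov and uniqueness conditions apply. We would state this interpretation explicitly as part of the hypothesis. We would then check \cref{ass:reg} using the Gaussian transition kernel of the VP construction, which smooths \(h\) for \(\tau<1\), together with the integrability condition \cref{eq:energy-normalizer}.
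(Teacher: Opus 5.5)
Your proposal is correct and follows essentially the same route as the paper's proof. Both argue by induction over cycles, use \cref{prop:am} to identify the on-policy fixed point with $u^\star$, and invoke the memoryless part of \cref{prop:optctrl} to obtain the terminal law $\pa_k$; your remarks on the prediction step, on reading ``fixed point'' as nonparametric stationarity, and on the regularity/uniqueness hypotheses only spell out what the paper's short proof assumes implicitly.
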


\begin{proof}
The result follows by induction over the assimilation cycles. The initial analysis
law is exact by assumption. 
For each cycle \(k\), by \cref{prop:am}, the on-policy fixed
point of the adjoint-matching regression recovers \(u^\star\). Since the base flow is
memoryless, \cref{prop:optctrl} implies that the corresponding controlled terminal
law is \(\pa_k\). The claim therefore follows for all \(k\).
\end{proof}
In practice EnCF departs from this ideal by four separately measurable amounts, which the next
subsection isolates in \cref{lem:decomp} and bounds in \cref{thm:onestep}, a base-drift Monte-Carlo
error, the suboptimality of the learned control, the surrogate energy gap, and the
time-discretization of the controlled flow.

\subsection{Local accuracy and one-step error decomposition}
\label{sec:onestep}
We isolate four sources of one-step approximation error under two structural
assumptions. \Cref{lem:decomp} first decomposes the error into these contributions.
We then bound the terms that admit analytic control and assemble the resulting
a priori estimate in \cref{thm:onestep}.

Formally, 
starting from the previous analysis $\nu\coloneqq\pa_{k-1}$, 
the exact one-step filtering operator
\(\Phi_k\coloneqq \mathcal B_{k}\circ(\mM_k)_\sharp\) consists of a predictor $(\mM_k)_\sharp:\pa_{k-1}\mapsto\pf_k$ and an exact analysis operator
\(\mathcal B_{k}(\nu)(\md x)\propto \me^{-J_k(x;y_k)}\nu(\md x)\).
In this subsection, we study how the practical update operator $\hat\Phi_k$ differs from the exact $\Phi_k$.

\begin{assumption}[Regularity]\label{ass:reg-thm}
The functions \(\hat f(\cdot,\tau)\) and \(u_\theta(\cdot,\tau;y_k)\) belong to
\(C^4(\mathbb R^n;\mathbb R^n)\), have bounded derivatives of orders
one through four, and satisfy a linear growth condition, uniformly in
\(\tau\in[0,1]\). 
The scalar diffusion coefficient
\(\sigma\in C_b^2([0,1])\) satisfies
\begin{equation}
    0<\sigma_{\min}\le\sigma(\tau)\le\sigma_{\max}<+\infty,\qquad\tau\in[0,1]
\end{equation}
for some constants $\sigma_{\min}$ and $\sigma_{\max}$.
\end{assumption}

\begin{assumption}[Girsanov Integrability]\label{ass:integ}
Let \(\hat f\) be the Monte-Carlo base drift \cref{eq:EnCF-f} and let
\begin{equation}
    \hat b=\hat f+u_\theta,
    \qquad
    b^\star=f+u^\star
\end{equation}
denote the practical and ideal controlled drifts, respectively.
We assume that the corresponding flows are well posed and that the drift discrepancy satisfies
\begin{equation}
    \mbE_{\widehat{\mathbb Q}}
    \int_0^1
    \frac1{\sigma(\tau)^2}\norm{\hat b(z,\tau)-b^\star(z,\tau)}^2
    \md\tau
    <\infty.
\end{equation}
Moreover, the exponential local martingale associated with this change of drift is
a true martingale, so that the Girsanov change of measure between the two path laws
is valid. A sufficient condition is the corresponding Novikov condition.
\end{assumption}
Fix an assimilation cycle \(k\). Let \(\widehat{\mathbb Q}\) denote the path law of
the practical continuous-time flow
\begin{equation}
    \mathrm dZ_\tau
    =
    \bigl[
        \hat f(Z_\tau,\tau)
        +
        u_\theta(Z_\tau,\tau;y_k)
    \bigr]\mathrm d\tau
    +
    \sigma(\tau)\md W_\tau,
    \qquad Z_0\sim\mu_0,
\end{equation}
and let \(\mathbb Q^\star\) denote the path law of the ideal flow

\begin{equation}
    \mathrm dZ_\tau
    =
    \bigl[
        f(Z_\tau,\tau)
        +
        u^\star(Z_\tau,\tau;y_k)
    \bigr]\mathrm d\tau
    +
    \sigma(\tau)\md W_\tau,
    \qquad Z_0\sim\mu_0.
\end{equation}
We write \(\widehat{\mathbb Q}_1\) and \(\mathbb Q^\star_1\) for their terminal
marginals, and \(\widehat{\mathbb Q}^{\mathrm{EM}}_1\) for the terminal law produced
by the Euler--Maruyama discretization of the practical flow. Thus, the practical terminal law \(\hat{\mbQ}^{\mathrm{EM}}_1=\hpa_k\coloneqq\hat\Phi_k(\pa_{k-1})\) and the ideal terminal law \(\mbQ^\star_1=\pa_k=\Phi_k(\pa_{k-1})\).

\begin{lemma}[One-step error decomposition]\label{lem:decomp}
Under \cref{ass:reg-thm,ass:integ}, the local one-step error admits the decomposition
\begin{equation}\label{eq:decomp}
  \begin{aligned}
    \TV{\hpa_k}{\pa_k}
    &\le\underbrace{\norm{\hat f-f}_{\hat\mbQ}}_{\text{base-drift error}}+\underbrace{
    \norm{u_\theta-u^\star_{E_\phi}}_{\hat{\mbQ}}
    }_{\text{control suboptimality}}+\underbrace{\norm{u^\star_{E_\phi}-u^\star}_{\hat\mbQ}}_{\text{energy mismatch}}+\underbrace{\TV{\hat\mbQ^{\mathrm{EM}}_1}{\hat\mbQ_1}}_{\text{discretization}}
  \end{aligned}
\end{equation}
for the total-variation (TV) discrepancy, where we introduce a path-space seminorm
\begin{equation}\label{eq:path-norm}
    \norm{g}_{\hat\mbQ}=\frac12\sqrt{\mbE_{\hat\mbQ}\int_0^1\frac1{\sigma(\tau)^2}\norm{g(Z_\tau,\tau)}^2\md\tau}
\end{equation}
for any integrable \(g=g(Z_\tau,\tau)\), and $u_{E_\phi}^\star$ stands for the optimal control corresponding to the energy $E_\phi$.
\end{lemma}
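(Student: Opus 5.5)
We first insert the continuous-time practical terminal law \(\hat\mbQ_1\) between \(\hpa_k=\hat\mbQ^{\mathrm{EM}}_1\) and \(\pa_k=\mbQ^\star_1\) (the latter identification uses \cref{prop:optctrl} with the memoryless base flow), and apply the triangle inequality for total variation:
\begin{equation*}
\TV{\hpa_k}{\pa_k}\le\TV{\hat\mbQ^{\mathrm{EM}}_1}{\hat\mbQ_1}+\TV{\hat\mbQ_1}{\mbQ^\star_1}.
\end{equation*}
The first term is exactly the discretization contribution in \cref{eq:decomp}, so it remains to bound the second term by the three path-space seminorms.

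For the continuous part we pass from marginals to path laws. Total variation contracts under the terminal-time projection, which is a measurable map, so \(\TV{\hat\mbQ_1}{\mbQ^\star_1}\le\TV{\hat\mbQ}{\mbQ^\star}\). Pinsker's inequality gives \(\TV{\hat\mbQ}{\mbQ^\star}\le\sqrt{\tfrac12\KLdiv{\hat\mbQ}{\mbQ^\star}}\). Both flows start from \(\mu_0\) and share the diffusion coefficient \(\sigma(\tau)\), so under \cref{ass:integ} Girsanov's theorem yields
\begin{equation*}
\KLdiv{\hat\mbQ}{\mbQ^\star}=\frac12\mbE_{\hat\mbQ}\int_0^1\frac{1}{\sigma(\tau)^2}\norm{\hat b-b^\star}^2\md\tau .
\end{equation*}
The true-martingale hypothesis in \cref{ass:integ} is what makes the stochastic-integral term vanish in expectation, and \cref{ass:reg-thm} (with \(\sigma\ge\sigma_{\min}\)) ensures both SDEs are well posed. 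Combining these,
\begin{equation*}
\TV{\hat\mbQ_1}{\mbQ^\star_1}\le\tfrac12\sqrt{\mbE_{\hat\mbQ}\textstyle\int_0^1\sigma^{-2}\norm{\hat b-b^\star}^2\md\tau}=\norm{\hat b-b^\star}_{\hat\mbQ}.
\end{equation*}
The factor \(\tfrac12\) in \cref{eq:path-norm} matches this constant exactly.

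Next we split the drift discrepancy telescopically:
\begin{equation*}
\hat b-b^\star=(\hat f-f)+(u_\theta-u^\star_{E_\phi})+(u^\star_{E_\phi}-u^\star).
\end{equation*}
The seminorm \(\norm{\cdot}_{\hat\mbQ}\) is a weighted \(L^2(\hat\mbQ\otimes\md\tau)\) norm, so Minkowski's inequality makes it subadditive, and we obtain the three drift terms in \cref{eq:decomp}. Here \(u^\star_{E_\phi}\) is the optimal control \cref{eq:ustar} computed with the exact base \(\mbP\) and energy \(E_\phi\); only pointwise definability and integrability under \(\hat\mbQ\) are needed, which we will either assume or note as implicit in the finiteness of the right-hand side.

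The main obstacle is justifying Girsanov between \(\hat\mbQ\) and \(\mbQ^\star\) when \(u^\star\) may be unbounded near \(\tau=1\). The plan is to rely directly on \cref{ass:integ}, which is posed precisely for this drift pair. If the right-hand side is infinite the bound is trivial, so we may assume each seminorm is finite.
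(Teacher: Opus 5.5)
Your proposal is correct and follows the paper's proof essentially step for step: a triangle inequality through $\hat\mbQ_1$ to isolate the Euler--Maruyama term, then Pinsker plus data processing, Girsanov for the path-space KL (giving exactly the factor $\tfrac12$ in the seminorm), and Minkowski for the three-way drift split; the only cosmetic difference is that you apply data processing to TV before Pinsker, whereas the paper applies it to the KL after Pinsker. One small attribution fix: the stochastic integral vanishes in expectation because of the square-integrability condition in \cref{ass:integ}, while the true-martingale condition is what validates the change of measure itself.
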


\begin{proof}
Since the two continuous-time flows have the same initial law and diffusion
coefficient, Girsanov's theorem gives
\begin{equation}\label{eq:KL-hat-Q-Q-star}
\KLdiv{\hat{\mathbb Q}}{\mathbb Q^\star}
=\frac12\mbE_{\hat\mbQ}\int_0^1\frac1{\sigma(\tau)^2}\norm{\bigl(\hat f-f\bigr)(Z_\tau,\tau)+(u_\theta-u^\star)(Z_\tau,\tau;y_k)}^2\md\tau,
\end{equation}
which is finite by \cref{ass:integ}. Pinsker's inequality and data processing from path laws to their terminal marginals imply
\begin{equation}\label{eq:TV-KL}
    \TV{\hat\mbQ_1}{\mbQ^\star_1}\le\sqrt{\frac12\KLdiv{\hat\mbQ_1}{\mbQ_1^\star}}\le\sqrt{\frac12\KLdiv{\hat\mbQ}{\mbQ^\star}}.
\end{equation}
Sampling \(\hat\mbQ\) uses its Euler--Maruyama discretization \(\hat\mbQ^{\mathrm{EM}}\), so the triangle inequality gives
\begin{equation}\label{eq:TV-EM-err}
    \TV{\hpa_k}{\pa_k}=\TV{\hat\mbQ^{\mathrm{EM}}_1}{\mbQ^\star_1}\le \TV{\hat\mbQ^{\mathrm{EM}}_1}{\hat\mbQ_1}+\TV{\hat\mbQ_1}{\mbQ^\star_1}.
\end{equation}
Finally, combining \cref{eq:KL-hat-Q-Q-star,eq:TV-KL,eq:TV-EM-err} and applying the triangle inequality to the path seminorm \cref{eq:path-norm} yields \cref{eq:decomp}.
\end{proof}
We then discuss the four terms in turn. The decomposition identifies the numerical levers of the
method, the ensemble representation and base drift for the base-drift term, the adjoint matching optimization error for the control term, 
the surrogate gradient mismatch for the energy term, and
the number of integration steps \(N_\tau\) for the discretization term.
% Only some of these levers yield derived rates. 
These terms have different theoretical status. The
base-flow and control errors are retained as approximation residuals, whereas the
energy and discretization terms admit explicit analytic bounds under additional
regularity assumptions.

For the base-drift term, \cref{prop:mc-error} proves in
the appendix the self-normalized importance-sampling rate \(\mathcal O(N^{-1/2})\), with a constant
governed by the \(\chi^2\) dispersion of the base weights. 
% For the deployed single-sample base
% (\cref{rmk:onesample}), this weighted score-estimation variance is removed. The remaining contribution
% is the unavoidable ensemble representation error of \(\pf_k\).
The control term measures the optimization errors between the learned control and the nonparametric fixed point induced by the
on-policy adjoint-matching regression as detailed in \cref{prop:am}. We retain
this term as a residual because a general convergence rate would additionally
require approximation estimates for the control class and quantitative guarantees
for the optimization procedure, which is beyond our scope in this paper.

\begin{proposition}[Continuous dependence of the optimal control on the energy]\label{prop:energy}
Let \(J_1,J_2\in C^1(\mathbb R^n)\) satisfy
\(\norm{\nabla J_1-\nabla J_2}_\infty<\infty\). Consider the homotopy \(J_r=(1-r)J_1+rJ_2\) and define the probability density
\(\rho_r^{z,\tau}\propto \tilde\rho_r^{z,\tau}\coloneqq\exp(-J_r)p_{1\mid\tau}(\cdot\mid z)\). Assume that
\begin{itemize}
    \item the score function \(\nabla\log p_{1\mid\tau}(z'\mid z)\) for the base flow has bounded variance
    \begin{equation}
        V\coloneqq\sup_{r,z,\tau}\Tr\Var_{\rho_{J_r}}\nabla\log p_{1\mid\tau}(\cdot\mid z)<+\infty;
    \end{equation}
    \item every \(\rho_r^{z,\tau}\) satisfies the Poincar\'e inequality
    \begin{equation}
        \Var_{\rho_r^{z,\tau}}(\varphi)
    \le
    C_P
    \mbE_{\rho_r^{z,\tau}}
    \norm{\nabla\varphi}^2,\qquad
    \forall\varphi\in H^1(\rho_r^{z,\tau})
    \end{equation}
    with a constant \(C_P\) independent of \(r\), \(z\), and \(\tau\).
\end{itemize}
    Then the optimal control \(u^\star\) depends continuously on the energy function \(J\). Formally, we have
    \begin{equation}
        \norm{u^\star_{J_1}-u^\star_{J_2}}_\infty\le \sigma(\tau)^2\sqrt{C_PV}\norm{\nabla J_1-\nabla J_2}_{\infty}.
    \end{equation}
    Taking \(J_1=E_\phi\) and \(J_2=J\) bounds the energy term of \cref{eq:decomp} by the energies' gradient mismatch alone. It follows immediately that
\begin{equation}
    \norm{u^\star_{E_\phi}-u^\star}_{\hat\mbQ}\le \frac{\sigma_{\max}}2\sqrt{C_PV}\norm{\nabla E_\phi-\nabla J}_{\infty}.
\end{equation}
\end{proposition}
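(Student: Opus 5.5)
We will prove the bound by interpolating between the two energies along the homotopy \(J_r\) and differentiating the optimal control with respect to \(r\). The starting point is \cref{prop:optctrl}: writing \(h_r(z,\tau)=\int \me^{-J_r(z')}p_{1\mid\tau}(z'\mid z)\,\md z'\), we have \(u^\star_{J_r}(z,\tau)=\sigma(\tau)^2\nabla_z\log h_r(z,\tau)\). Differentiating under the integral sign, and assuming enough integrability to exchange derivatives, gives
\begin{equation*}
\nabla_z\log h_r(z,\tau)=\mbE_{\rho_r^{z,\tau}}\bigl[\nabla_z\log p_{1\mid\tau}(Z'\mid z)\bigr].
\end{equation*}
So the control is \(\sigma^2\) times the posterior mean of the base-flow score \(S(z')\coloneqq\nabla_z\log p_{1\mid\tau}(z'\mid z)\) under the tilted conditional law \(\rho_r^{z,\tau}\).

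Next we differentiate in \(r\). Since \(\partial_r J_r=J_2-J_1\eqqcolon\Delta\), the standard exponential-family identity gives
\begin{equation*}
\partial_r\mbE_{\rho_r}[S]=-\Cov_{\rho_r}(S,\Delta).
\end{equation*}
Integrating over \(r\in[0,1]\) then yields
\begin{equation*}
u^\star_{J_2}-u^\star_{J_1}=-\sigma^2\int_0^1\Cov_{\rho_r}(S,\Delta)\,\md r.
\end{equation*}
Fix a unit vector \(e\). By Cauchy--Schwarz,
\begin{equation*}
\abs{\Cov_{\rho_r}(e^\trans S,\Delta)}\le\sqrt{\Var_{\rho_r}(e^\trans S)}\,\sqrt{\Var_{\rho_r}(\Delta)}.
\end{equation*}
The first factor is bounded by \(\sqrt{\Tr\Var_{\rho_r}S}\le\sqrt V\). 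For the second factor, the Poincaré inequality gives
\begin{equation*}
\Var_{\rho_r}(\Delta)\le C_P\,\mbE_{\rho_r}\norm{\nabla\Delta}^2\le C_P\norm{\nabla J_1-\nabla J_2}_\infty^2.
\end{equation*}
Taking the supremum over \(e\) and over \(r\), \(z\), \(\tau\) yields the sup-norm bound \(\sigma^2\sqrt{C_PV}\norm{\nabla J_1-\nabla J_2}_\infty\).

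One caveat concerns applying the Poincaré inequality to \(\Delta\). The difference \(\Delta\) need not lie in \(H^1(\rho_r)\) a priori, since it can grow linearly. We handle this by truncation, or by noting that a Poincaré inequality implies exponential integrability of Lipschitz functions, so Lipschitz \(\Delta\) is admissible.

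For the path-seminorm corollary we insert the pointwise bound into \cref{eq:path-norm}:
\begin{equation*}
\norm{u^\star_{E_\phi}-u^\star}_{\hat\mbQ}\le\tfrac12\sup_\tau\sigma(\tau)^{-1}\sigma(\tau)^2\sqrt{C_PV}\,\norm{\nabla E_\phi-\nabla J}_\infty\le\tfrac{\sigma_{\max}}2\sqrt{C_PV}\,\norm{\nabla E_\phi-\nabla J}_\infty.
\end{equation*}
This step uses \(\sigma\le\sigma_{\max}\) from \cref{ass:reg-thm} and the fact that \(\hat\mbQ\) is a probability measure on \([0,1]\)-indexed paths.

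The main obstacle is justifying the differentiation under the integral, both in \(z\) and in \(r\), together with the finiteness of \(h_r\). Near \(\tau\to1\) the kernel \(p_{1\mid\tau}\) degenerates, so its score blows up. The uniform variance assumption \(V\) is precisely what absorbs this, but dominated convergence still requires local integrability bounds. We would first prove the identity for \(\tau<1\) using the \(C^1\) regularity of \(h\) from \cref{ass:reg}, and then note that the bound is uniform in \(\tau\).
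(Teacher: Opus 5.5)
Your proposal is correct and follows the paper's proof essentially step for step. Like the paper, you write $u^\star_{J_r}$ as $\sigma(\tau)^2$ times the $\rho_r^{z,\tau}$-average of the base-kernel score, differentiate along the homotopy to obtain a covariance with $J_1-J_2$, integrate in $r$, and close with Cauchy--Schwarz, the variance bound $V$, and the Poincar\'e inequality, before inserting the pointwise bound into the path seminorm with $\sigma\le\sigma_{\max}$. Your directional Cauchy--Schwarz and your remark that Poincar\'e-implied exponential integrability places the Lipschitz function $J_1-J_2$ in $H^1(\rho_r^{z,\tau})$ are harmless refinements that the paper leaves implicit.
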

\begin{proof}[Proof sketch]
{Write the optimal control \cref{eq:ustar} as a Gibbs average of the base-kernel score,
whose differentiation along the homotopy is exactly a covariance between \(J_1-J_2\) and the base score function.
Integrating in \(r\) and applying Cauchy--Schwarz, the bounded score variance \(V\), and the Poincar\'e
inequality with constant \(C_P\) gives the bound. The complete calculation
is provided in \cref{app:proof-energy}.}
\end{proof}

Two consequences make this the pivot of the implicit-energy and likelihood-free
design. First, although \(u_J^\star\) is nonlocal in \(J\), replacing \(J\) by a learned surrogate perturbs
the filtering map only through \(\norm{\nabla E_\phi-\nabla J}_\infty\) under the stated assumptions, so
energy learning by adjoint matching is stable in the control metric. Besides, the control is invariant to shifts
\(J\mapsto J+c\) in \(x\). EnCF-LF therefore need not estimate the tilt normalizer over the state, making the maximum-likelihood training reasonable.

\begin{proposition}[Time-discretization error,  \cite{ballytalay1996,Bally1996sde}]\label{prop:disc}
Let \(\hat\mbQ^{\mathrm{EM}}\) be the path law induced by the uniform-step Euler--Maruyama discretization of \(\hat\mbQ\) with \(N_\tau\) steps. Under \cref{ass:reg-thm}, the corresponding terminal laws satisfy
\begin{equation}
    \TV{\hat\mbQ^{\mathrm{EM}}_1}{\hat\mbQ_1}\le C_{\mathrm{EM}}N_\tau^{-1},
\end{equation}
where the constant \(C_{\mathrm{EM}}\) depends only on the regularity constants. The estimate
follows from the total-variation convergence of the Euler--Maruyama scheme.
\end{proposition}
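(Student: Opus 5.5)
The plan is to rely on the weak-error/density expansion for Euler--Maruyama under hypoelliptic or uniformly elliptic conditions, as in Bally--Talay. Here the diffusion is additive and uniformly elliptic, since \(\sigma(\tau)I_n\) has \(\sigma_{\min}\le\sigma\le\sigma_{\max}\). The drift \(\hat b=\hat f+u_\theta\) is \(C^4\) with bounded derivatives of orders one through four and linear growth, uniformly in \(\tau\), by \cref{ass:reg-thm}. The first step is to reduce the TV bound to a weak-error bound over bounded measurable test functions: \(\TV{\hat\mbQ^{\mathrm{EM}}_1}{\hat\mbQ_1}=\sup_{\|\varphi\|_\infty\le1}\tfrac12|\mbE\varphi(Z^{\mathrm{EM}}_1)-\mbE\varphi(Z_1)|\). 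The task is then to prove \(|\mbE\varphi(Z^{\mathrm{EM}}_1)-\mbE\varphi(Z_1)|\le C\|\varphi\|_\infty N_\tau^{-1}\) with \(C\) depending only on the regularity constants. Because \(Z_0\sim\mu_0\) is common to both, I would condition on \(Z_0=z_0\) and prove the bound uniformly in \(z_0\); if needed, growth dependence would be absorbed using moments of \(\mu_0=\mN(0,I_n)\).

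For smooth \(\varphi\), I would use the standard Talay--Tubaro telescoping argument. Let \(v(t,z)=\mbE[\varphi(Z_1)\mid Z_t=z]\), which solves the backward Kolmogorov equation \(\partial_t v+\hat b\cdot\nabla v+\tfrac12\sigma^2\Delta v=0\), with \(v(1,\cdot)=\varphi\). I would write the error as a sum over steps of one-step local errors \(\mbE[v(t_{i+1},Z^{\mathrm{EM}}_{t_{i+1}})-v(t_i,Z^{\mathrm{EM}}_{t_i})]\). Itô's formula on each step, with the drift frozen, gives local errors of order \(h^2\) times derivatives of \(v\) up to order four combined with derivatives of \(\hat b\). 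The time regularity \(\sigma\in C^2_b\) handles the freezing of \(\sigma\). Summing over \(N_\tau=1/h\) steps gives \(O(h)\).

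The main obstacle is extending this to merely bounded measurable \(\varphi\), since then \(v(t,\cdot)\) is smooth only for \(t<1\), with derivative bounds blowing up like \((1-t)^{-j/2}\). Here I would follow Bally--Talay. For steps with \(t_i\le 1-h\) (or a fixed fraction of the interval), I would use the parabolic smoothing estimate \(\|\partial^j v(t,\cdot)\|_\infty\le C\|\varphi\|_\infty(1-t)^{-j/2}\). This estimate follows from uniform ellipticity plus \(C^4_b\) drift derivatives, either via Malliavin integration by parts or via Gaussian-type heat-kernel bounds. The resulting local errors are of order \(h^2(1-t_i)^{-2}\), and their naive sum produces a logarithmic factor. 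To remove it, I would use the sharper Bally--Talay device: for the final steps, rather than smoothing \(\varphi\), integrate by parts against the Euler scheme's own density. The Euler scheme with additive nondegenerate noise has a smooth Gaussian-convolution transition, so the last step regularizes \(\varphi\) symmetrically. Alternatively, I would invoke directly the density-expansion result of \cite{Bally1996sde}, \(p^{\mathrm{EM}}_1-p_1=h\,\pi_1+O(h^2)\) with Gaussian-type bounds integrable in \(z\), which yields the \(L^1\), and hence TV, bound \(C_{\mathrm{EM}}N_\tau^{-1}\) at once. In practice I would present the result as a verification that \cref{ass:reg-thm} implies the hypotheses of \cite{ballytalay1996,Bally1996sde}: uniform ellipticity, \(C^\infty\)-type, here \(C^4\), bounded-derivative coefficients, and linear growth. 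Time dependence of the coefficients is handled by treating \(\tau\) as an extra state component or by the time-inhomogeneous version of the argument.
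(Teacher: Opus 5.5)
Your route is the paper's route. The paper's entire proof is the appeal to Bally--Talay, and you likewise make ``\cref{ass:reg-thm} implies the Bally--Talay hypotheses'' the load-bearing step. That step does not go through, and the paper's one-line argument fails at the same place.

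Bally and Talay treat \emph{time-homogeneous} equations whose coefficients are \(C^\infty\) with bounded derivatives of all orders, under a uniform H\"ormander-type nondegeneracy condition. \Cref{ass:reg-thm} gives only \(C^4\) in \(z\), and no regularity at all of \(\hat b=\hat f+u_\theta\) in \(\tau\). Both of your ways of handling time dependence need what is missing:
\begin{itemize}
\item Adding \(\tau\) as a state component requires joint smoothness in \((\tau,z)\). Even with it, the augmented system violates the uniform H\"ormander condition, because neither the noise fields \(\sigma(\tau)\partial_{z_j}\) nor any of their brackets with \(\partial_\tau+\hat b\cdot\nabla_z\) has a \(\partial_\tau\)-component.
\item The time-inhomogeneous Talay--Tubaro expansion needs a bound on \(\hat b(z,s)-\hat b(z,t_i)\) to control the drift-freezing error. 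This is exactly parallel to your use of \(\sigma\in C_b^2\) for the diffusion-freezing error.
\end{itemize}
This is not a technicality. Euler--Maruyama reads the drift only at grid times. The drift \(\hat b(z,\tau)=c\,\mathbf 1_{\mathbb Q}(\tau)e_1\) satisfies \cref{ass:reg-thm}, yet the scheme outputs \(Z_0+c\,e_1+\sum_i\sigma(t_i)\Delta W_i\) while the true flow gives \(Z_0+\int_0^1\sigma\,\md W\). The terminal TV distance therefore stays bounded away from zero.

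To make the argument correct, you must first add a quantitative time-regularity hypothesis. One option is \(\norm{\hat b(z,\tau)-\hat b(z,\tau')}\le L(1+\norm{z})\abs{\tau-\tau'}\); another is joint smoothness, which the base drift does have under the regularized schedule of \cref{app:experiment}. You then have two ways to finish. You can work in the \(C^\infty\) setting of the citation and adapt the Bally--Talay argument to time-dependent coefficients; only nondegeneracy of the \(z\)-component is needed, and uniform ellipticity supplies it. Alternatively, you can give a finite-regularity proof that tracks how many derivatives of \(\hat b\) are actually consumed, so that \(C^4\) is justified rather than asserted.

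There is also an arithmetic slip in your sketch. With local errors of size \(h^2(1-t_i)^{-2}\), the naive sum is \(\sum_{k\ge1}h^2(kh)^{-2}=\sum_{k\ge1}k^{-2}=O(1)\), not \(O(h\log(1/h))\). A logarithm is what you get if you stop the expansion at second derivatives of \(v\), where the blow-up is only \((1-t)^{-1}\). Within the fourth-order expansion you describe, the Malliavin integration by parts onto the Euler-scheme density is therefore what makes the argument converge at all. You need it on a whole terminal interval \([1-\delta,1]\), with nondegeneracy uniform in \(N_\tau\); it is not merely a refinement that removes a logarithm.

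The remaining ingredients are fine: the reduction of TV to bounded test functions, and the integration of polynomial-in-\(z_0\) constants against \(\mu_0=\mN(0,I_n)\).
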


Combining the estimate of each component in the decomposition \cref{eq:decomp}, we have the following aggregated result.

\begin{theorem}[One-step filtering error]
\label{thm:onestep}
Suppose that the one-step update is initialized from the exact analysis law
\(\pa_{k-1}\). Under
\cref{ass:reg-thm,ass:integ} and the assumptions of \cref{prop:energy}, the local one-step filtering error satisfies
\begin{equation}\label{eq:decomp-detailed}
    \TV{\hpa_k}{\pa_k}\le\norm{\hat f-f}_{\hat\mbQ}+\norm{u_\theta-u^\star_{E_\phi}}_{\hat\mbQ}+L_\mathrm{E}\norm{\nabla E_\phi-\nabla J}_{\infty}+C_\mathrm{EM}N_\tau^{-1},
\end{equation}
where $L_\mathrm{E}=\tfrac12{\sigma_{\max}}\sqrt{C_PV}$. \hfill\proofbox
\end{theorem}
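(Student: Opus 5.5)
The bound follows by assembling earlier results. I would start from the four-term decomposition of \cref{lem:decomp}. Its hypotheses are \cref{ass:reg-thm,ass:integ}, which are assumed here, and the initialization from the exact \(\pa_{k-1}\) guarantees that the ideal terminal law \(\mbQ^\star_1\) is exactly \(\pa_k=\Phi_k(\pa_{k-1})\). Hence the left-hand side of \cref{eq:decomp} is the local error \(\TV{\hpa_k}{\pa_k}\). The first two terms, the base-drift error and the control suboptimality, are kept unchanged as residuals, exactly as in the statement.

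For the energy-mismatch term, I would invoke \cref{prop:energy} with \(J_1=E_\phi\) and \(J_2=J\). It gives the pointwise bound \(\norm{u^\star_{E_\phi}-u^\star}\le\sigma(\tau)^2\sqrt{C_PV}\norm{\nabla E_\phi-\nabla J}_\infty\) for every \((z,\tau)\). I would substitute this into the seminorm \cref{eq:path-norm}. The integrand \(\sigma(\tau)^{-2}\norm{u^\star_{E_\phi}-u^\star}^2\) is then at most \(\sigma(\tau)^2 C_PV\norm{\nabla E_\phi-\nabla J}_\infty^2\le\sigma_{\max}^2C_PV\norm{\nabla E_\phi-\nabla J}_\infty^2\). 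Because this bound is deterministic, integrating over \(\tau\in[0,1]\) and taking the \(\hat\mbQ\)-expectation changes nothing, and the prefactor \(\tfrac12\) together with the square root gives \(L_\mathrm{E}\norm{\nabla E_\phi-\nabla J}_\infty\) with \(L_\mathrm{E}=\tfrac12\sigma_{\max}\sqrt{C_PV}\).

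For the discretization term, \cref{prop:disc} applies directly under \cref{ass:reg-thm} and gives \(C_\mathrm{EM}N_\tau^{-1}\). Summing the four bounds yields \cref{eq:decomp-detailed}.

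The only step that needs real care is checking that the hypotheses of \cref{prop:energy} hold uniformly along the whole homotopy between \(E_\phi\) and \(J\). That proposition needs a Poincaré constant and a score-variance bound that do not depend on \(r\), \(z\) or \(\tau\), and these are assumed rather than derived. I would also confirm that \(\sigma\) is bounded above by \(\sigma_{\max}\) on the whole interval, not only away from the endpoints. Given the stated assumptions, everything else is bookkeeping.
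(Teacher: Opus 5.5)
Your proposal is correct and follows the paper's route: you start from the decomposition in \cref{lem:decomp}, keep the base-drift and control terms as residuals, and substitute the Euler--Maruyama bound of \cref{prop:disc} together with the consequence of \cref{prop:energy} for $J_1=E_\phi$, $J_2=J$. Your explicit passage from the pointwise bound $\sigma(\tau)^2\sqrt{C_PV}\norm{\nabla E_\phi-\nabla J}_\infty$ to the path seminorm, which yields $L_\mathrm{E}=\tfrac12\sigma_{\max}\sqrt{C_PV}$, is exactly the step the paper dismisses as ``it follows immediately.''
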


% \begin{proof}
% Substitute the energy bound of \cref{prop:energy} and the discretization bound of \cref{prop:disc} into the decomposition \cref{eq:decomp}. The remaining drift and control terms are characterized by \cref{prop:mc-error,prop:am-err}. \todo{fix this} {The pointwise bounds of \cref{prop:mc-error,prop:energy} pass to the path semi-norm \(\norm{\cdot}_{\hat\mbQ}\) by integrating over \([0,1]\), uniformly integrable under \cref{ass:integ}.}
% \end{proof}

\Cref{thm:onestep} separates the local filtering error according to the four
approximations made by the practical method. The first term depends on the
accuracy with which the empirical base dynamics represent the forecast transport.
The second reflects the expressiveness and optimization accuracy of the learned
control. The third is governed by the \(C^1\)-accuracy of the surrogate energy,
while the fourth decreases as the transport-time discretization is refined. The
bound is therefore structural and a priori rather than an online error certificate,
since the exact base drift \(f\) and optimal control \(u^\star\) are generally
unavailable.
The constant \(L_\mathrm{E}\) also indicates that surrogate-energy errors may be amplified
when the tilted bridge measures become poorly conditioned, as reflected by large
Poincar\'e or score-variance constants. This is consistent with the intuition that sharply concentrated observation models tend to be more challenging.

\subsection{Multi-step stability and error propagation}
The one-step estimate of \cref{thm:onestep} quantifies the local error introduced
during a single assimilation cycle. To propagate this estimate over multiple
cycles, we compare the practical recursion with the exact filtering recursion
started from the same intermediate distributions. The key requirement is stability
of the exact filtering operators with respect to perturbations of their input laws.

% \begin{definition}[Filtering operators]\label{def:filter-op}
% For the observation energy \(J_k(\cdot;y_k)\), define the exact analysis operator
% \(\mathcal B_{k}(\nu)(\md x)\propto \me^{-J_k(x;y_k)}\nu(\md x)\), then the exact one-step filtering operator is
% \(\Phi_k\coloneqq \mathcal B_{k}\circ(\mM_k)_\sharp\).
To study the accumulated filtering error, we write \(\Phi_{k\to l}\coloneqq\Phi_{l}\circ\cdots\circ\Phi_{k+1}\) for $k\le l$ and set \(\Phi_{k\to k}\) as the identity. Then we define
\begin{equation}\label{eq:dk-ek-def}
    d_k\coloneqq\TV{\hpa_k}{\pa_k},\quad e_k\coloneqq\TV{\hpa_k}{\Phi_k(\hpa_{k-1})}=\TV{\hat\Phi_k(\hpa_{k-1})}{\Phi_k(\hpa_{k-1})}
\end{equation}
as the accumulated filtering error and the one-step filtering error, respectively. Recall that we have actually provided an upper bound of \(e_k\) in \cref{thm:onestep}.
% When \(J_k=-\log\ell(\cdot;y_k)\) with the likelihood function $\ell(\cdot;y_k)$, the analysis operator \(\mathcal B_k\) reduces to the standard Bayes update. Meanwhile, we let $\hat\Phi_k$ be the practical filtering operator, accounting for the one-step error \cref{eq:decomp} from the exact $\Phi_k$. It follows directly from \cref{thm:onestep} that \cref{eq:decomp-detailed} actually estimates the operator difference between $\Phi_k$ and $\hat\Phi_k$ with respect to the TV distance.
% \end{definition}

\begin{assumption}[Exponential stability of the exact filter]
\label{ass:stab}
Let \(\mathcal P\) be a class of probability measures invariant under the exact
filtering operators. There exist constants \(C_{\mathrm{stab}}\ge1\) and
\(\varrho\in(0,1)\) such that
\begin{equation}
\label{eq:filter-stability}
    \TV{
        \Phi_{k\to l}(\nu)
    }{
        \Phi_{k\to l}(\nu')
    }
    \le
    C_{\mathrm{stab}}
    \varrho^{l-k}
    \TV{\nu}{\nu'}
\end{equation}
for every \(\nu,\nu'\in\mathcal P\) and all \(0\le k\le l\).
\end{assumption}
Estimates of the form \eqref{eq:filter-stability} are classical under suitable
uniform mixing conditions on the forecast kernels and upper and lower bounds on
the observation potentials
\citep{atarzeitouni1997,leglandoudjane2004}. 
Note that \cref{ass:stab} is a structural property of the exact filtering recursion independent of the choice of numerical implementation.

\begin{theorem}[Multi-step filtering error]\label{thm:multistep}
Suppose that \cref{ass:stab} holds and that all intermediate exact and practical
laws belong to \(\mathcal P\). Then the accumulated filtering error satisfies
\begin{equation}\label{eq:dl-upper-bound}
    d_l
    \le
    C_{\mathrm{stab}}\varrho^l d_0
    +
    \frac{C_{\mathrm{stab}}}{1-\varrho}
    \max_{1\le k\le l}e_k.
\end{equation}
Moreover, each one-step error \(e_k\) is bounded from above by \cref{thm:onestep} under certain assumptions.
In particular, if the filter is initialized exactly as $d_0=0$, then the accumulated filtering error $d_k$ is bounded by the maximum of one-step errors up to a constant multiplier.
\end{theorem}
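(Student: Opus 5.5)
The argument is the standard telescoping decomposition from the perturbation theory of Markov filters \citep{leglandoudjane2004}. We compare the practical recursion with the exact recursion restarted from each intermediate practical law. Fix \(l\ge1\) and write
\begin{equation*}
    \hpa_l-\pa_l
    =
    \Phi_{0\to l}(\hpa_0)-\Phi_{0\to l}(\pa_0)
    +
    \sum_{k=1}^{l}
    \Bigl[
        \Phi_{k\to l}(\hpa_k)-\Phi_{k\to l}\bigl(\Phi_k(\hpa_{k-1})\bigr)
    \Bigr].
\end{equation*}
This identity telescopes. It uses \(\Phi_{k\to l}\circ\Phi_k=\Phi_{k-1\to l}\), \(\Phi_{l\to l}=\mathrm{id}\), and \(\pa_l=\Phi_{0\to l}(\pa_0)\). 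The last identity holds because the exact analysis laws are generated by the exact operators, as in \cref{eq:recursion} and the definition of \(\Phi_k\).

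Next we take the TV distance and apply the triangle inequality, which gives
\begin{equation*}
    d_l
    \le
    \TV{\Phi_{0\to l}(\hpa_0)}{\Phi_{0\to l}(\pa_0)}
    +
    \sum_{k=1}^{l}
    \TV{\Phi_{k\to l}(\hpa_k)}{\Phi_{k\to l}(\Phi_k(\hpa_{k-1}))}.
\end{equation*}
Every argument \(\hpa_k\), \(\Phi_k(\hpa_{k-1})\) and \(\pa_0\) lies in \(\mathcal P\). For the practical laws this is the standing hypothesis of the theorem, and for the exact images it follows from the invariance of \(\mathcal P\) under the exact operators. We may therefore apply \cref{ass:stab} to each term. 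The first term is bounded by \(C_{\mathrm{stab}}\varrho^{l}d_0\). The \(k\)-th summand is bounded by \(C_{\mathrm{stab}}\varrho^{l-k}e_k\), using the definition \cref{eq:dk-ek-def} of \(e_k\).

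Finally we bound each \(e_k\) by \(\max_{1\le k\le l}e_k\) and sum the geometric series:
\begin{equation*}
    \sum_{k=1}^{l}\varrho^{l-k}\le\frac{1}{1-\varrho}.
\end{equation*}
This gives \cref{eq:dl-upper-bound}. Setting \(d_0=0\) yields the last claim. The remark that each \(e_k\) is controlled by \cref{thm:onestep} needs one observation: \cref{thm:onestep} is stated for the update started from the exact law \(\pa_{k-1}\), whereas \(e_k\) is started from \(\hpa_{k-1}\). The same proof of \cref{lem:decomp} applies verbatim with \(\hpa_{k-1}\) as the input law. The resulting bound has the same four terms, with \(f\), \(u^\star\) and the constants now referring to the forecast \((\mM_k)_\sharp\hpa_{k-1}\).

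The main point needing care is the bookkeeping in this last step and the membership of all intermediate laws in \(\mathcal P\). The latter is assumed rather than proved. The telescoping and the geometric sum are routine.
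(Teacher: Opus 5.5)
Your proof is correct and is the paper's argument: the paper uses the same telescoping decomposition (indexed by $k\mapsto l-k$, with $\Phi_{l-k-1\to l}(\hpa_{l-k-1})=\Phi_{l-k\to l}(\Phi_{l-k}(\hpa_{l-k-1}))$ left implicit), applies \cref{ass:stab} term by term, and sums the geometric series. You also point out that \cref{thm:onestep} has to be re-applied with input $\hpa_{k-1}$ instead of $\pa_{k-1}$ to bound $e_k$; this is a fair clarification that the paper leaves implicit in ``under certain assumptions''.
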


\begin{proof}
The complete proof is deferred to \cref{app:proof-multistep}.
\end{proof}

\Cref{thm:multistep} does not assert that the practical filtering map is itself
contractive. Rather, it shows that local perturbations of a stable exact filtering
recursion are geometrically attenuated as they propagate forward in time. Thus,
uniformly bounded local errors produce a uniformly bounded global error, with
amplification factor \(C_{\mathrm{stab}}/(1-\varrho)\).

The one-step estimate \cref{thm:onestep} remains an a priori structural bound because its base-flow
and control terms involve the unavailable exact drift and optimal control.
\Cref{thm:multistep} complements this local estimate by describing how such
errors propagate across assimilation cycles. When \(\varrho\) is close to one,
the exact filter forgets perturbations slowly and the resulting multi-step bound
becomes correspondingly less informative.

% \begin{corollary}[Likelihood-free variant]
% \label{cor:lf}
% The Adjoint-Matching machinery touches \(J\) only through \(\nabla J\) evaluated at samples and detached
% from the regression graph (the adjoint seed and the control-net feature). Replacing a known differentiable
% energy \(J\) by a learned conditional energy \(E_\phi=-\log p_\phi(y\mid x)\), fitted from simulator samples,
% therefore leaves \cref{prop:optctrl} and \cref{prop:am} intact with \(J\to E_\phi\). If
% \(E_\phi\to J\) in \(C^1\) on the support of the controlled paths, the likelihood-free fixed point
% \emph{converges to} the likelihood-based one. This is a \(C^1\)-stability statement rather than exact
% coincidence, since \(J\) enters the optimum nonlocally through \(\E[e^{-J}\mid Z_\tau{=}\cdot]\), with the
% residual being the energy term of \cref{prop:energy}. The observation operator's analytic form,
% likelihood formula, and derivatives are never used by the controlled-flow solver once the surrogate
% energy is fitted.
% \end{corollary}

\section{Experiments}
\label{sec:experiments}
This section evaluates the proposed EnCF and its likelihood-free variant EnCF-LF
against classical ensemble filters and recently developed flow-based filters. The
experiments are organized according to the structure of the observation model. We
first consider a smooth additive-Gaussian setting as a sanity check
(\cref{sec:conventional}), where classical Kalman-type filters are expected to perform
well. We then examine a non-Gaussian additive noise model
(\cref{sec:nongaussian}), which progressively moves the posterior away from a
Gaussian approximation. Finally, we focus on the observation regimes targeted by
the proposed formulation: a many-to-one sign-blind sensor that induces a genuinely
multimodal posterior (\cref{sec:sign-blind}), and an implicit, non-additive
observation relation for which the likelihood-free variant uses simulator access only
(\cref{sec:implicit}). The computational cost of learning the controlled flow is
reported separately in \cref{sec:cost}.

\subsection{Filter configurations}\label{sec:filter-configurations}
The proposed filters, EnCF and EnCF-LF, are implemented following
\cref{alg:encf}. 
The control \(a_\theta\) is represented by a convolutional neural network. The state
\(z\) and the observation \(y\) are passed through separate convolutional encoders,
and the resulting feature maps are concatenated and decoded by a merge convolution
to produce the control. The network is initialized at zero. In EnCF, the observation
energy enters the adjoint-matching regression only through the terminal condition
\(\nabla J_k(Z_1;y_k)\), as described in \cref{alg:encf}. Full implementation details
and hyperparameters are provided in \cref{app:experiment}.

At each assimilation step \(k\), the control network of our EnCF is trained from scratch. We therefore use
more optimization in the early cycles and gradually reduce the training budget. In
all experiments, the number of training epochs \(E_k\) follows a cosine schedule,
inspired by \cite{loshchilov2016sgdr}, decreasing from \(100\) to \(1\) over the first
\(20\) assimilation cycles. After this warm-up phase, each analysis step uses a single
training epoch. The number of regression steps is set as $\Nreg=16$.

We compare against two classes of ensemble filters. The classical baselines are the
EnKF \cite{evensen2009}, the LETKF \cite{hunt2007letkf}, and the bootstrap particle
filter \cite{vanleeuwen2019}. For EnKF and LETKF, covariance inflation and
Gaspari--Cohn localization are tuned separately for each benchmark; the tuning
protocol is reported in \cref{app:experiment}. The particle filter uses the exact likelihood
whenever it is available, but is included mainly as a reference method because it is
known to suffer from weight degeneracy in high-dimensional problems
\cite{snyder2008obstacles}.
The flow-based baselines are EnSF \cite{bao2024ensf}, EnFF \cite{enff}, and EnSBF
\cite{Sun2025EnSBF}. For EnFF, we report both the optimal-transport variant
(EnFF-OT) and the filtering-to-predictive variant (EnFF-F2P). All flow-based
baselines use the same number of transport-time steps \(N_\tau=100\), matching the
EnCF implementation.

\subsection{Metrics}
For each benchmark, we run the filter for \(K\) assimilation cycles and report
time-averaged quantities over a converged window \(\mathcal C\), defined as the last
\(10\%\) of the rollout. We use three complementary metrics as follows.
\paragraph{Root mean square error (RMSE)} We evaluate the accuracy of the ensemble mean
\(\bar x_k=N^{-1}\sum_{j=1}^N x_k^{(j)}\)
by computing the root mean square error against the reference state
\(x_k^\star\in\mathbb R^n\), defined by
\begin{equation}
\mathrm{RMSE}_k=\sqrt{\frac1n\sum_{i=1}^n\bigl(\bar x_{k,i}-x_{k,i}^\star\bigr)^2},
\qquad
\mathrm{RMSE}=\frac1{|\mathcal C|}\sum_{k\in\mathcal C}\mathrm{RMSE}_k,
\label{eq:rmse}
\end{equation}
where \(i\) indexes coordinates. Lower RMSE indicates a more accurate ensemble mean.

\paragraph{Continuous ranked probability score (CRPS)}
Since the analysis output is an ensemble distribution rather than only a point
estimate, we also report the multivariate energy-score form of the continuous ranked
probability score
\begin{equation}
\mathrm{CRPS}_k=\frac1N\sum_{j=1}^N\norm{x_k^{(j)}-x_k^\star}_2
-\frac1{2N^2}\sum_{j=1}^N\sum_{j'=1}^N\norm{x_k^{(j)}-x_k^{(j')}}_2.
\label{eq:crps}
\end{equation}
The reported CRPS is averaged over \(k\in\mathcal C\). The first term penalizes
distance from the reference state, whereas the second term rewards ensemble spread.
Thus CRPS evaluates the ensemble as a distribution, with lower values indicating a sharper and better calibrated ensemble.

\paragraph{Spread-to-RMSE ratio (SRR)}
We use
\begin{equation}
    \mathrm{Spread}_k=\sqrt{\frac1{nN}\sum_{j=1}^N\sum_{i=1}^n\bigl(x_{k,i}^{(j)}-\bar x_{k,i}\bigr)^2},
    \qquad\mathrm{SRR}_k=\frac{\mathrm{Spread}_k}{\mathrm{RMSE}_k},
\end{equation}
as a calibration diagnostic.
The final SRR is again averaged over \(k\in\mathcal C\). Values near one indicate
that the ensemble spread is consistent with the realized error. Values below one
indicate under-dispersion or ensemble collapse, whereas values above one indicate
over-dispersion.

All configurations are run with five independent random seeds. Tables report the
mean and standard deviation over seeds, and figures show the same variability using
error bars or shaded bands.

\subsection{A conventional assimilation task}\label{sec:conventional}
We begin with a standard setting in which the observation model is smooth, dense,
and corrupted by additive Gaussian noise. This experiment is intended as a sanity
check, and the purpose is to verify
that our controlled-flow formulation remains stable and competitive before moving
to the non-Gaussian, multimodal, and implicit observation models considered afterward.

We consider the Kuramoto--Sivashinsky (KS) dynamics
\begin{equation}
    u_t+uu_x+u_{xx}+u_{xxxx}=0
\end{equation}
on a periodic domain discretized into \(1024\) grid points. The state is observed
through the componentwise nonlinear sensor
\begin{equation}\label{eq:ks-arctan}
    y=\mH(x)+\varepsilon\in\mathbb R^{1024},\qquad\mH(x)=\arctan(x),\quad
    \varepsilon\sim\mathcal N(0,0.1^2 I_{1024}).
\end{equation}
We run \(1000\) assimilation cycles with ensemble size \(N=20\).

\begin{table}
\centering
\begin{tabular}{@{}lccc@{}}
\toprule
method  & RMSE (\(\downarrow\)) & CRPS (\(\downarrow\)) & SRR \\
\midrule
EnKF             & \(0.026\pm0.000\) & \(0.60\pm0.01\) & \(1.114\pm0.018\) \\
LETKF            & \(0.025\pm0.001\) & \(0.58\pm0.03\) & \(1.131\pm0.044\) \\
\midrule
EnSF                     & \(0.080\pm0.001\) & \(1.89\pm0.03\) & \(0.715\pm0.012\) \\
EnFF-OT                  & \(0.076\pm0.001\) & \(2.37\pm0.02\) & \(0.038\pm0.001\) \\
EnFF-F2P                 & \(0.185\pm0.001\) & \(5.93\pm0.02\) & \(0.002\pm0.000\) \\
EnSBF                    & \(1.760\pm0.016\) & \(54.1\pm0.49\) & \(0.062\pm0.001\) \\
\midrule
% EnCF (\(\Nreg=1\))          & \(0.056\pm0.001\) & \(1.56\pm0.02\) & \(2.07\pm0.05\)\\
EnCF         & \(0.056\pm0.002\) & \(1.52\pm0.03\) & \(2.02\pm0.07\)\\
% EnCF-LF (\(\Nreg=1\))       & \(0.055\pm0.002\) & \(1.37\pm0.02\) & \(1.54\pm0.07\)\\
EnCF-LF      & \(0.068\pm0.003\) & \(1.65\pm0.05\) & \(1.35\pm0.07\)\\
\bottomrule
\end{tabular}
\caption{KS dynamics with the smooth additive-Gaussian arctangent observation
in \cref{eq:ks-arctan}. All methods use \(N=20\) ensemble members. Values are
mean \(\pm\) standard deviation over 5 random seeds.}
\label{tab:ksmain}
\end{table}

The results in \cref{tab:ksmain} are consistent with the expected behavior in this
conventional regime. The tuned EnKF and LETKF achieve the lowest RMSE and CRPS,
confirming that classical covariance-based updates remain preferable when the
observation model is smooth, dense, and additive Gaussian. Among the flow-based
methods, EnCF gives the best RMSE and CRPS, while EnCF-LF remains stable despite
using a learned surrogate observation energy. Several existing flow-based baselines,
especially EnFF-F2P and EnSBF, show very small SRR values, indicating strong
under-dispersion or ensemble collapse.
This sanity check provides a baseline for the more challenging non-Gaussian,
multimodal, and implicit observation regimes considered in the following
subsections.

\subsection{Non-Gaussian additive noise}
\label{sec:nongaussian}
We next keep the KS dynamics and the arctangent observation operator in
\cref{eq:ks-arctan}, but replace the Gaussian observation noise by a symmetric
two-component mixture. Coordinate-wise, the observation model is
\begin{equation}
    y=\mH(x)+\varepsilon,\quad \mH(x)=\arctan(x),\qquad
    \varepsilon_i\sim\mathcal N(a\xi_i\sigma,\sigma^2),
    \quad \sigma=0.1,
    \label{eq:bimodal-noise}
\end{equation}
where the signs \(\xi_i\) are independent Rademacher variables.
The scalar parameter
\(a\ge0\) controls the separation of the two noise modes in units of \(\sigma\).
Thus \(a=0\) recovers the Gaussian setting in \cref{sec:conventional}, while larger
values of \(a\) produce increasingly separated likelihood modes.
\Cref{fig:nongauss}(a) and (b) show how the likelihood changes as \(a\) increases.
The noise density develops two separated
modes, and the corresponding observation energy \(J=-\log \ell(\cdot\mid y)\)
changes from a single well to a double-well landscape.

Note that for a fixed \(a\), the marginal noise distribution has mean zero and variance \(\sigma^2(1+a^2)\).
Therefore, in addition to the nominal EnKF using \(R_k=\sigma^2 I\) (defined in \cref{eq:enkf}), we also report a matched-variance EnKF using \(R_k=\sigma^2(1+a^2)I\). The latter is more reasonable since it uses the correct marginal variance of the mixture, although
it still replaces the bimodal likelihood by a single Gaussian approximation.
\cref{fig:nongauss}(c) reports the converged RMSE for various offsets \(a\).
For small \(a\), the likelihood remains close to Gaussian, and the Kalman update is
the most effective choice. As \(a\) increases, the two likelihood modes separate, and
a single Gaussian residual model becomes less representative of the posterior. The
matched-variance EnKF partly accounts for the larger noise scale, but it still cannot
represent the bimodal structure of the likelihood. In contrast, EnCF directly uses
the observation energy and remains comparatively insensitive to the mode separation.
The RMSE curves cross around \(a\approx4\), and beyond this point, EnCF gives lower
error than both EnKF variants. This experiment therefore isolates a transition from
a regime where Gaussian covariance updates are sufficient to one where the controlled-flow approach becomes advantageous.

\begin{figure}
\centering
\includegraphics[width=\linewidth]{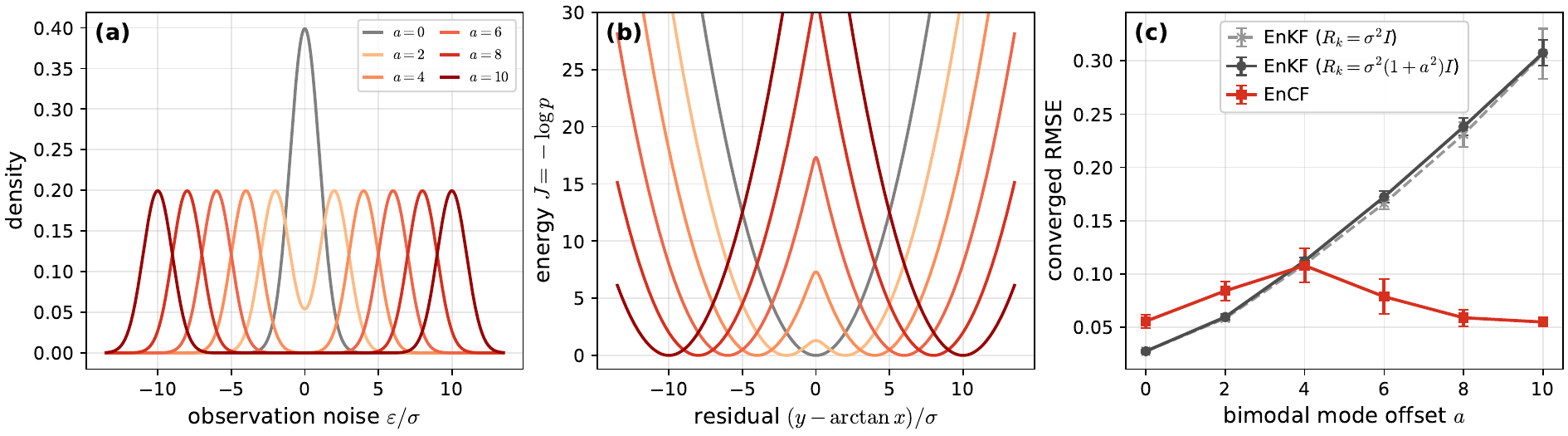}
\caption{KS dynamics with non-Gaussian additive noise ($N=20$, 5 seeds). \emph{(a)} Bimodal noise density in units of \(\sigma\) for various offsets \(a\). 
\emph{(b)} Induced observation energy \(J=-\log \ell\), which changes from a
single well at \(a=0\) to a double-well landscape as \(a\) increases.
\emph{(c)} Converged RMSE versus the mode offset \(a\) for the EnKF and the EnCF approaches.
}
\label{fig:nongauss}
\end{figure}

\subsection{Multimodal posteriors}
\label{sec:multimodal}
We now turn to observation models that induce genuinely multimodal analysis
distributions. The common difficulty in this section is that the state-to-observation
relation is many-to-one, so a single observation can be compatible with multiple
well-separated states. In such cases, a unimodal Gaussian update can collapse distinct
posterior modes, while methods relying directly on analytic likelihood gradients may
only exploit the local information exposed by the observation map. The two
benchmarks below isolate this issue in a controlled setting: the first uses an explicit
differentiable sign-blind sensor, and the second replaces the additive observation
model by an implicit, non-additive relation.

Both benchmarks in this section share the same double-well dynamics in \(\mathbb R^{20}\),
\begin{equation}
    \md x_t=-4 x_t(x_t^2-1)\md t+0.5\md W_t
\end{equation}
where the nonlinear operations are applied coordinate-wise and \(W_t\) is a
standard Brownian motion. 
The drift is induced by a double-well
potential \(V(x)\) with stable wells at \(x=\pm1\), as illustrated in
\cref{fig:dw-potential}. We use \(N=40\) ensemble members and assimilate every
\(20\) model steps for \(100\) assimilation cycles.

\begin{figure}
    \centering
    \begin{minipage}{.3\textwidth}
    \begin{tikzpicture}[scale=0.8]
      \draw[->] (-2, 0) -- (2, 0) node[right] {\(x\)};
      \draw[->] (0, -.5) -- (0, 2.5) node[above] {\(y=V(x)=x^4-2x^2\)};
      \draw[scale=0.5, domain=-1.8:1.8, smooth, variable=\x, red, thick] plot ({\x}, {\x*\x*\x*\x-2*\x*\x});
    \end{tikzpicture}
    \end{minipage}%
    \hfill\vline\hfill 
    \begin{minipage}{.65\textwidth}
        \includegraphics[width=\textwidth]{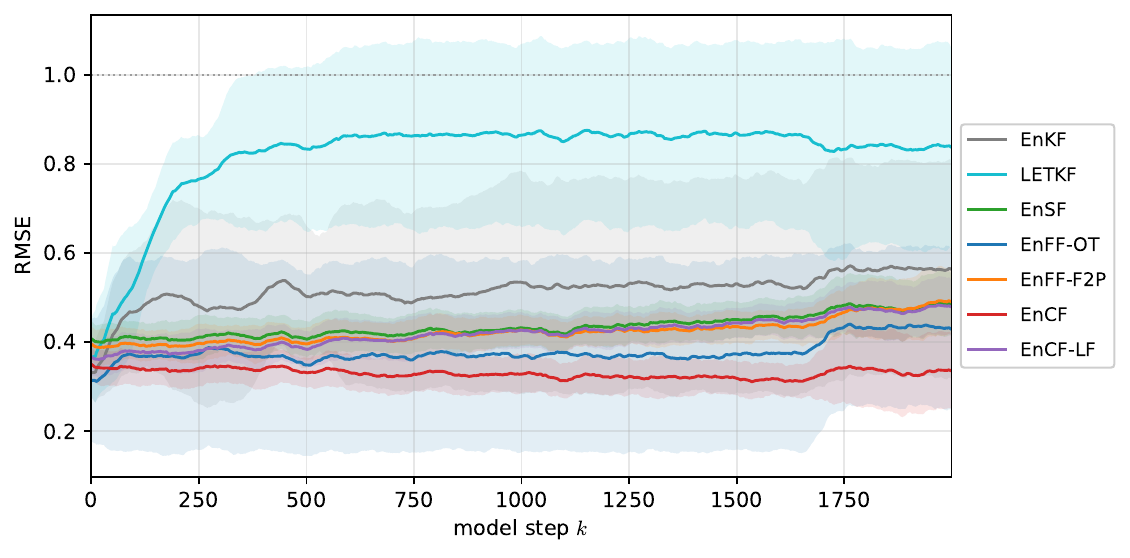}
    \end{minipage}
    \caption{Illustration of the double-well potential (left) and the evolution of assimilation error for each method (right).}
    \label{fig:dw-potential}
    
\end{figure}

\subsubsection{Sign-blind magnitude observation}
\label{sec:sign-blind}
To start with, we consider an explicit many-to-one observation model. The observation consists of a dominant magnitude channel and a weak
linear sign channel
\begin{equation}
    y=\mH(x)+\varepsilon\in\mR^{40},\qquad\mH(x)=(x^2,cx)^{\trans},\quad
    \varepsilon\sim\mathcal N(0,0.1^2 I_{40}),
\end{equation}
where all operations are applied coordinate-wise. The first part is sign-blind, and thus we introduce a linear channel with slope \(c=0.02\) that carries the sign at effective noise \(0.1/0.02=5\) times the state scale, making the
problem hard but identifiable over time.
By this construction, each analysis step remains strongly multimodal.

The RMSE evolution in \cref{fig:dw-potential} shows that the Kalman-type filters
struggle in this regime even after tuning inflation and localization. The existing flow-based filters improve
over the Kalman baselines but remain limited by the artificial controls used to guide
the transport. In contrast, EnCF achieves the lowest error among the compared
methods.
This supports the intended role of the controlled-flow formulation: rather
than imposing a Gaussian update or relying on a prescribed likelihood-score control
along the path, it learns a transport induced by the terminal observation energy and
can better accommodate the two-well posterior structure.

\subsubsection{Implicit and non-additive observation}
\label{sec:implicit}
The sign-blind sensor in \cref{sec:sign-blind} is many-to-one, but it is still given
by an explicit observation map with additive Gaussian noise. We now replace this
structure by an implicit, non-additive observation relation. The observation consists
of a circle-constraint channel and a weak sign channel, given by
\begin{equation}\label{eq:obs-implicit}
\mathcal R(x,y_{\mathrm c})=x^2+y_{\mathrm c}^2-2\sim\mN(0,0.3^2I_{20});\qquad y_s=cx+\eta,\quad\eta\sim\mN(0,0.1^2I_{20}).
\end{equation}
All operations are applied coordinate-wise, and we have \(y=(y_{\mathrm c},y_{\mathrm s})\in\mathbb R^{40}\) as observation data.
In the circle channel, the
noise is imposed on the constraint \(x^2+y_{\mathrm c}^2\approx2\), rather than being
added to a forward map \(y_{\mathrm c}=H_{\mathrm c}(x)+\varepsilon\). Thus the
observation model is not naturally represented as an additive residual in the
observation variable. Moreover, the constraint is even in \(x\), so it is again
sign-blind; the weak linear channel \(y_{\mathrm s}\) is included only to make the
state identifiable over time.

This benchmark separates two levels of access to the observation model. We provide the exact observation relation in
\cref{eq:obs-implicit} to the flow-based methods, while the Kalman-type baselines
use the natural residual representation with tuned inflation and localization. In
contrast, EnCF-LF is not given the analytic form of \cref{eq:obs-implicit}. It only
receives simulator samples from the conditional observation law at forecast states and
learns a surrogate energy online. This setting is intended to mimic an observation
mechanism for which forward simulation is available but an explicit likelihood or
differentiable observation formula is not.

\begin{table}
\centering\small
\begin{tabular}{@{}lccc@{}}
\toprule
method & RMSE & CRPS & SRR\\
\midrule
EnKF (tuned) & \(0.591\pm0.127\) & \(2.23\pm0.52\) & \(0.30\pm0.03\)\\
LETKF (tuned) & \(0.750\pm0.163\) & \(2.90\pm0.71\) & \(0.27\pm0.09\)\\
% bootstrap PF (\(N=400\)) & \(0.600\pm0.075\) & \(2.29\pm0.32\) & \(0.26\pm0.04\)\\
% bootstrap PF (\(N=4000\)) & \(0.396\pm0.257\) & \(1.46\pm1.05\) & \(0.60\pm0.42\)\\
\midrule
EnSF & \(0.684\pm0.039\) & \(2.20\pm0.11\) & \(1.30\pm0.06\)\\
EnFF-OT & \(0.827\pm0.100\) & \(3.30\pm0.44\) & \(0.18\pm0.02\)\\
EnFF-F2P & \(0.717\pm0.206\) & \(2.55\pm0.74\) & \(0.53\pm0.04\)\\
\midrule
% EnCF (\(\Nreg=1\)) & \(0.480\pm0.063\) & \(1.61\pm0.18\) & \(1.66\pm0.18\)\\
EnCF & \(\mathbf{0.433\pm0.060}\) & \(\mathbf{1.47\pm0.17}\) & \(1.76\pm0.21\)\\
% EnCF-LF (\(\Nreg=1\)) & \(0.438\pm0.068\) & \(1.49\pm0.19\) & \(1.76\pm0.20\)\\
EnCF-LF & \(0.435\pm0.067\) & \(1.48\pm0.17\) & \(1.75\pm0.25\)\\
\bottomrule
\end{tabular}
\caption{Double-well dynamics with the implicit circle sensor (\(N=40\), 5 seeds). Bold marks
the best mean among the filters with the smallest seed spread (see text).}
\label{tab:implicit}
\end{table}

The results in \cref{tab:implicit} show that the controlled-flow methods are the most
effective in this implicit observation regime. With analytic access to the observation
energy, EnCF obtains substantially lower RMSE and CRPS than the classical and
existing flow-based baselines. The likelihood-free variant performs comparably,
matching the exact-energy EnCF to within a fraction of the seed spread, despite using
only simulator samples to learn the surrogate energy. This indicates that the energy-learning interface can
recover the relevant observation information without requiring an explicit additive
observation map.

\Cref{fig:implicit} explains the difference in calibration behavior. The
Kalman-type filters and the EnFF variants have small SRR values in
\cref{tab:implicit}, indicating under-dispersion in this multimodal implicit problem.
For EnKF, the ensemble visualization shows that this under-dispersion corresponds
to a collapse onto a single well for each coordinate. Once the Gaussian update
selects the wrong well, the small ensemble spread makes subsequent recovery
difficult. By contrast, EnCF and EnCF-LF are somewhat over-dispersed according to
SRR, but their lower CRPS suggests that the retained ensemble diversity is useful
rather than spurious. The ensemble keeps mass near both wells while still
concentrating around the sign supported by the observations. This behavior is
consistent with the two-valued structure induced by the circle constraint and
illustrates why controlled posterior transport is beneficial for implicit many-to-one
observations.

\begin{figure}
\centering
\includegraphics[width=\linewidth]{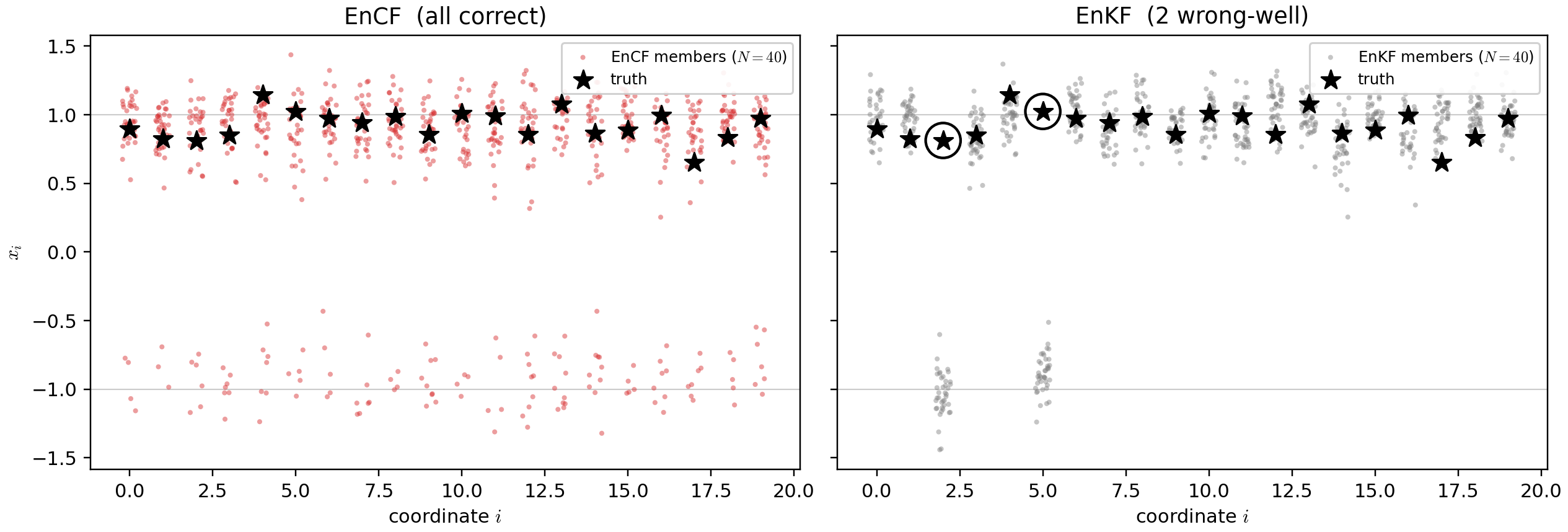}
\caption{The analysis ensemble of \(40\) members at
the final assimilation step (\(\star\): truth, \(n=20\)). \emph{Left:} EnCF keeps mass in \emph{both}
wells of the sign-blind posterior while centering on the correct sign for every coordinate.
\emph{Right:} the EnKF's Gaussian update collapses the ensemble onto a single well per
coordinate and lands in the wrong one on two of them (circled).}
\label{fig:implicit}
\end{figure}

\subsection{Computational cost}
\label{sec:cost}
The improved flexibility of EnCF comes with the additional cost of learning a
control during each analysis step. We therefore report wall-clock time separately
from the accuracy metrics above. The comparison is meant to quantify the practical
overhead of the learned controlled flow, rather than to claim that EnCF is the
cheapest filter. All timings are measured under the same implementation and
hardware environment. Since absolute runtime depends on implementation details,
we focus on the relative cost across methods.

\Cref{fig:ekcost} separates the transient and steady-state costs of the learned
controlled flow. In the early cycles, EnCF and EnCF-LF are substantially more
expensive because the control is trained with a larger epoch budget. After the cosine
schedule reaches \(E_k=1\), the per-cycle cost becomes stable. EnCF-LF remains more
expensive than EnCF in this steady regime because it also learns a surrogate
observation energy, while EnKF is the cheapest method and the existing flow-based
filters have lower per-cycle cost than the controlled-flow methods.

The right panel relates this steady-state cost to the converged RMSE on the implicit
circle benchmark. The Kalman-type filters and existing flow-based filters are faster,
but they have larger errors in this implicit many-to-one setting. EnCF and EnCF-LF
require higher wall-clock time per assimilation cycle, but they occupy the low-error
region of the plot. Thus the learned control introduces a measurable computational
overhead, while providing a favorable accuracy-cost trade-off in the observation
regime for which the method is designed.

\begin{figure}
\centering
\begin{minipage}{.48\textwidth}
\includegraphics[width=\linewidth]{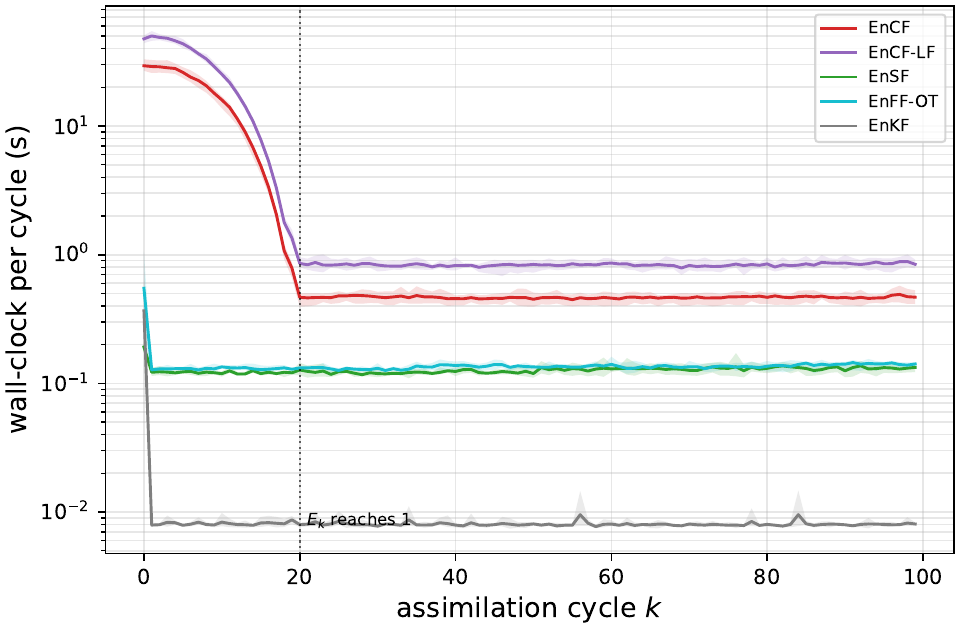}
\end{minipage}%
\hfill
\begin{minipage}{.48\textwidth}
\includegraphics[width=\linewidth]{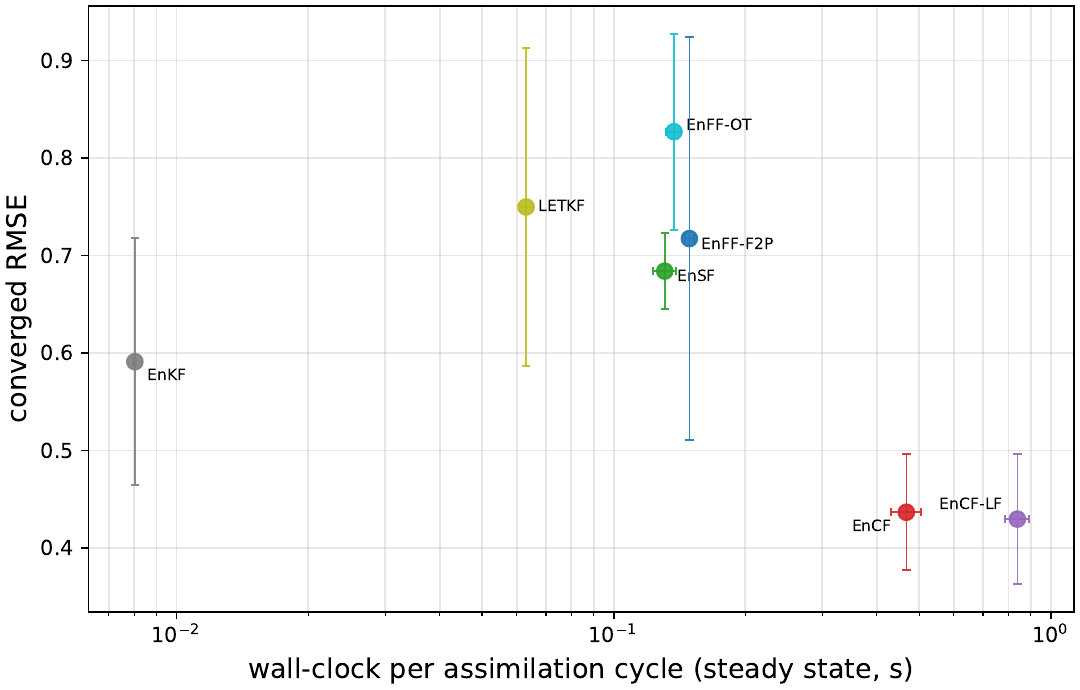}
\end{minipage}
\caption{Computational cost on the implicit circle benchmark.
\emph{Left:} wall-clock time per assimilation cycle over the rollout. EnCF and
EnCF-LF use the cosine training schedule described in \cref{sec:filter-configurations},
which decreases the number of control-training epochs from \(100\) to \(1\) during
the first \(20\) cycles.
\emph{Right:} converged RMSE versus steady-state wall-clock time per assimilation
cycle. Each marker corresponds to one filter after the EnCF training schedule has
annealed to \(E_k=1\). Shaded bands and error bars indicate variability over five
random seeds.}
\label{fig:ekcost}
\end{figure}

\section{Discussion}\label{sec:discussion}

This work introduces implicit data assimilation as a formulation for incorporating
observations that are not naturally represented by an additive residual or an explicit
forward operator. By expressing the analysis distribution as an energy tilt of the
forecast law, the observation model is separated from the numerical mechanism used
to realize the update. EnCF implements this update through a stochastic controlled
flow, while EnCF-LF replaces an unavailable analytic energy with a surrogate learned
from simulator samples. The accompanying analysis establishes ideal exactness and
organizes the practical filtering error into contributions from the base flow, the
learned control, the surrogate energy, and temporal discretization. Together, these
results provide a numerical framework for constructing ensemble analyses from
energy-defined observational information.

A broader implication is that the interface between a scientific measurement model
and an ensemble filter need not be restricted to residuals, moments, or pointwise
likelihood weights. These representations are highly effective when their underlying
assumptions are appropriate, and the results confirm that tuned Kalman-type methods
remain preferable for smooth additive-Gaussian observations. The role of EnCF is
therefore complementary. It is intended for problems in which the principal difficulty
lies in the geometry or accessibility of the observation model, including many-to-one,
multimodal, implicit, and simulator-defined relations. In these regimes, controlled
transport offers a way to retain distinct observation-compatible states rather than
compressing the analysis into a local Gaussian summary or requiring a separately
derived guidance rule for each sensing mechanism.

The flexibility of this formulation also creates several opportunities for systematic
algorithm design. Because the forecast representation, observation energy, learned
control, and transport discretization enter the analysis through distinct components,
each can be improved without changing the overall filtering principle. The error
decomposition developed in this work makes this modularity explicit and identifies
how ensemble quality, control optimization, energy approximation, and numerical
resolution affect the resulting analysis. It therefore provides a useful basis for
allocating computational effort across assimilation cycles and for adapting the method
to sharply concentrated observations, evolving forecast distributions, and spatially
coupled systems. For EnCF-LF in particular, the separation between energy learning
and controlled transport allows increasingly expressive observation models to be
incorporated while preserving the same analysis solver.

These observations point toward several directions for further development. Adaptive
schemes could select the base flow and allocate control optimization and transport
resolution according to the difficulty of each analysis step. Localized, reduced-order,
or multiscale parameterizations of the control may extend the approach to substantially
larger systems. For likelihood-free assimilation, richer conditional energy models
could accommodate non-Gaussian, spatially dependent, and temporally correlated
observation processes while retaining forward-only simulator access. More generally,
the decomposition into a forecast-generating base flow, an observation energy, and a
learned control provides a modular foundation on which alternative transport
dynamics, energy models, and approximation strategies can be developed. This
modularity, rather than the replacement of established filters in their natural regime,
is the main opportunity offered by the controlled-flow perspective.

\section*{Reproducibility}
All the codes for data generation, network training, and visualization are publicly available on the GitHub repository \texttt{\url{https://github.com/zylipku/EnCF}}.

\section*{Declarations}
% The computational work for this article was partially performed on resources of the National Supercomputing Centre, Singapore (\url{https://www.nscc.sg}).
During the preparation of this work, the authors used large language models (LLMs) to assist with the code writing and to polish the written text for spelling and grammar. The authors thoroughly reviewed and edited the content and take full responsibility for the final publication.

\section*{Acknowledgements}
Z.L. is supported by the Ministry of Education, Singapore, under its Research Centre of Excellence award to the Institute for Functional Intelligent Materials. (Project No. EDUNC-33-18-279-V12). Y.Z. is supported by the National University of Singapore and the AI for Science Institute, Beijing, through the AISI–NUS Joint Research Initiative Fund 2025 Award.

\bibliographystyle{siamplain}
\bibliography{references}
\appendix

\section{Deferred proofs}\label{app:proofs}

\subsection{Base-drift Monte-Carlo error}\label{app:mc-error}
\begin{proposition}[Monte-Carlo error of the base drift]
\label{prop:mc-error}
Fix \((z,\tau)\) with \(\beta_\tau>0\) and let
\(\hat Z_k^{(j)}\sim\pf_k\) be independent forecast samples for $j=1,\cdots,N$.
Define
\begin{equation}
\begin{aligned}
    d(z_0)\coloneqq p_{\tau\mid0}(z\mid z_0),\qquad\zeta(z_0)\coloneqq(\alpha_\tau z_0-z)/\beta_\tau^2,\quad\text{and}\\
    \hat g_N(z,\tau)\coloneqq\sum_{j=1}^N w_j\zeta(\hat Z_k^{(j)})\approx\nabla\log p_\tau(z_\tau)\eqqcolon g(z,\tau).
\end{aligned}
\end{equation}
Then \(\hat g_N\) is considered as the
self-normalized estimator of the marginal score
\(g(z,\tau)\) as in \cref{eq:EnCF-f}. Under
\cref{ass:reg}, if \(\mbE_{\pf_k}[d^2]<\infty\) and \(\mbE_{\pf_k}[\norm{\zeta}^2d^2]<\infty\), then \(\hat g_N\) is strongly
consistent and obeys the Monte-Carlo rate
\begin{equation}
\hat g_N-g=\mathcal O_p(N^{-1/2}),\qquad \mbE\norm{\hat g_N-g}^2=\mathcal O(N^{-1}).
\label{eq:mc-rate}
\end{equation}
The rate is dimension-free, but its multiplicative constant scales with the second moment of the
importance weights
\begin{equation*}
\rho\coloneqq\frac{\mbE_{\pf_k}[d^2]}{p_\tau(z)^2}=1+\chi^2\bigl(\pi^{z,\tau}\bigm\|\pf_k\bigr)\ge1,
\qquad \pi^{z,\tau}\propto \pf_kd,
\end{equation*}
the \(\chi^2\)-divergence between the posterior tilt \(\pi^{z,\tau}\) and the forecast prior, which grows as
the kernel sharpens (\(\beta_\tau\to0\), i.e.\ \(\tau\to1\)) or the state dimension increases.
\end{proposition}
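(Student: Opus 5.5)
The estimator \(\hat g_N\) is a ratio of two empirical means, so I will treat it by the standard self-normalized importance-sampling argument. Write
\[
A_N\coloneqq\frac1N\sum_{j=1}^N \zeta(\hat Z_k^{(j)})\,d(\hat Z_k^{(j)}),\qquad B_N\coloneqq\frac1N\sum_{j=1}^N d(\hat Z_k^{(j)}),
\]
so that \(\hat g_N=A_N/B_N\), since \(w_j=d(\hat Z_k^{(j)})/\sum_{j'}d(\hat Z_k^{(j')})\). The population counterparts are \(A\coloneqq\mbE_{\pf_k}[\zeta d]\) and \(B\coloneqq\mbE_{\pf_k}[d]=p_\tau(z)>0\). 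The first step identifies \(A/B=g(z,\tau)\). Differentiating \(p_\tau(z)=\int p_{\tau\mid0}(z\mid z_0)\pf_k(z_0)\md z_0\) under the integral sign, which is justified by the Gaussian kernel together with \(\mbE[\norm{\zeta}d]<\infty\) (this follows from Cauchy--Schwarz and the moment assumptions), and using \(\nabla_z\log p_{\tau\mid0}(z\mid z_0)=(\alpha_\tau z_0-z)/\beta_\tau^2=\zeta(z_0)\), gives \(\nabla p_\tau(z)=A\). Hence \(g=A/B\), which is the posterior mean \(\mbE_{\pi^{z,\tau}}[\zeta]\).

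For \emph{strong consistency}, the assumptions \(\mbE[d^2]<\infty\) and \(\mbE[\norm{\zeta}^2d^2]<\infty\) imply integrability. The strong law of large numbers then gives \(A_N\to A\) and \(B_N\to B\) almost surely, and continuity of \((a,b)\mapsto a/b\) at \(b=B>0\) gives \(\hat g_N\to g\) almost surely.

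For the \emph{rate}, I use the exact identity
\[
\hat g_N-g=\frac{A_N-gB_N}{B_N}=\frac{1}{B_N}\cdot\frac1N\sum_j(\zeta_j-g)d_j.
\]
The numerator is a mean of i.i.d.\ centered vectors, because \(\mbE[(\zeta-g)d]=A-gB=0\). Its variance is therefore
\[
\frac1N\mbE\bigl[\norm{\zeta-g}^2d^2\bigr]=\frac{B^2}{N}\,\mbE_{\pi^{z,\tau}}\!\left[\norm{\zeta-g}^2\frac{d}{B}\right],
\]
and \(d/B\) is exactly the density ratio \(\md\pi^{z,\tau}/\md\pf_k\). The central limit theorem applied to the numerator, combined with \(B_N\to B\) and Slutsky's lemma, yields \(\hat g_N-g=\mathcal O_p(N^{-1/2})\), with asymptotic covariance \(N^{-1}\mbE[\norm{\zeta-g}^2d^2]/B^2\). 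To exhibit the dependence on \(\rho\), I bound this constant by Cauchy--Schwarz with respect to \(\pf_k\):
\[
\frac{\mbE[\norm{\zeta-g}^2d^2]}{B^2}\le\rho^{1/2}\left(\frac{\mbE[\norm{\zeta-g}^4d^2]}{B^2}\right)^{1/2}.
\]
Alternatively, I can bound it directly by \(\rho\sup\norm{\zeta-g}^2\) on the effective region. The identity \(\rho=1+\chi^2(\pi^{z,\tau}\|\pf_k)\) holds because \(\mbE_{\pf_k}[(d/B)^2]=\int(\md\pi/\md\pf)^2\md\pf\). The claimed growth as \(\beta_\tau\to0\) or as \(n\) increases is then a heuristic remark: \(d\) concentrates on a ball of radius about \(\beta_\tau/\alpha_\tau\), so \(\chi^2\) scales like an inverse volume.

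The \emph{main obstacle} is the mean-square bound \(\mbE\norm{\hat g_N-g}^2=\mathcal O(N^{-1})\), since the denominator \(B_N\) can be small and an \(L^2\) bound does not follow from the CLT. I plan to use the self-normalization: \(\hat g_N\) is a convex combination of the values \(\zeta_j\). I split on the event \(\{B_N\ge B/2\}\). On this event, the error is at most \((2/B)\) times the numerator, whose second moment is \(\mathcal O(N^{-1})\) with the constant above. On the complement, \(\norm{\hat g_N-g}\le\max_j\norm{\zeta_j-g}\). Chebyshev's inequality gives \(\mbP(B_N<B/2)\le4\Var(d)/(NB^2)=4(\rho-1)/N\), and a Cauchy--Schwarz or Hölder step controls \(\mbE[\max_j\norm{\zeta_j-g}^2\mathbf 1_{B_N<B/2}]\). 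If the available moments are too weak for this second term, I would fall back on the truncation device of Agapiou et al., or bound \(\norm{\zeta}\) by a polynomial in \(z\) together with Gaussian tails of \(\pf_k\) (Assumption~\ref{ass:reg}). In that case the \(\mathcal O(N^{-1})\) mean-square statement is obtained up to a constant proportional to \(\rho\).
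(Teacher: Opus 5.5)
Your ratio representation \(\hat g_N=A_N/B_N\), the identity \(A/B=g\), strong consistency via the SLLN, and the \(\mathcal O_p(N^{-1/2})\) rate via the CLT and Slutsky are all correct. This is the classical self-normalized importance-sampling argument that the paper's proof simply cites.

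The gap is in the step you flag yourself, the mean-square bound, and the problem is not weak moments of \(\zeta\). On \(E\coloneqq\{B_N<B/2\}\) you only have Chebyshev's \(\mbP(E)\le4(\rho-1)/N\). No Cauchy--Schwarz or H\"older step turns this into \(\mbE[\max_j\norm{\zeta_j-g}^2\mathbf 1_E]=\mathcal O(N^{-1})\):
\begin{itemize}
\item Cauchy--Schwarz gives \((\mbE\max_j\norm{\zeta_j-g}^4)^{1/2}\,\mbP(E)^{1/2}\). The factor \(\mbE\max_j\norm{\zeta_j-g}^4\) does not decay; it grows like \((\log N)^2\) even for Gaussian-tailed \(\pf_k\). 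The result is at best \(\mathcal O(N^{-1/2}\log N)\).
\item H\"older gives \(N^{-(1-1/p)}\) times a factor that grows in \(N\).
\item Even the sharper leave-one-out estimate \(\sum_j\mbE\norm{\zeta_j-g}^2\,\mbP(E_j)\) is only \(\mathcal O(1)\) with an \(N^{-1}\) tail.
\end{itemize}
So your fallbacks (truncation, Gaussian tails of \(\pf_k\)) do not close the step. What is missing is a lower-deviation bound for \(B_N\) that decays faster than \(N^{-2}\).

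The missing ingredient is that, for fixed \((z,\tau)\) with \(\beta_\tau>0\), the Gaussian kernel is bounded. Uniformly in \(z\) and \(z_0\), \(0\le d\le D\coloneqq(2\pi\beta_\tau^2)^{-n/2}\) and \(\norm{\zeta}d\le D/(\beta_\tau\sqrt{\me})\). This also makes the two displayed moment hypotheses automatic. It is what properly justifies differentiating under the integral, since pointwise finiteness of \(\mbE[\norm{\zeta}d]\) gives no local domination.

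Define \(E_j\coloneqq\{N^{-1}\sum_{i\ne j}d_i<B/2\}\). It contains \(E\) because \(d_j\ge0\), and it is independent of \(\hat Z_k^{(j)}\). For \(N\ge3\), Hoeffding's inequality gives \(\mbP(E_j)\le\exp(-(N-1)B^2/(8D^2))\). Hence
\[
\mbE\bigl[\max_j\norm{\zeta_j-g}^2\mathbf 1_E\bigr]\le\sum_{j=1}^N\mbE\bigl[\norm{\zeta_j-g}^2\bigr]\,\mbP(E_j)\le N\,\mbE_{\pf_k}\norm{\zeta-g}^2\,\me^{-(N-1)B^2/(8D^2)}=o(N^{-1}),
\]
provided \(\pf_k\) has a finite second moment. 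That condition is not implied by the stated hypotheses, so you must add it or derive it. One route is a Gronwall moment bound for the base SDE with linear-growth drift started at \(\mN(0,I_n)\), under \cref{ass:reg}. Alternatively, Marcinkiewicz--Zygmund bounds with high moments of the bounded \(d\) give a polynomial tail that is fast enough. Since \(B/D\) is typically exponentially small in \(n\), the exponential only takes effect for large \(N\), which is consistent with the dimension remark.

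Finally, your Cauchy--Schwarz bound does not show that the constant is proportional to \(\rho\). It leaves \(\rho^{1/2}\) times a \(\pi^{z,\tau}\)-weighted fourth moment that itself involves \(d/B\). The paper does not prove this either: Agapiou et al.'s \(4\rho/N\) mean-square bound is for bounded test functions, and \(\zeta\) is unbounded. The \(\rho\)-scaling should therefore be stated as a heuristic.
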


\begin{proof}
The estimator \(\hat g_N\) is the self-normalized importance-sampling estimator of the ratio
\(g=\mbE_{\pf_k}[\zeta d]/\mbE_{\pf_k}[d]\). Under the stated second-moment conditions the rate
\eqref{eq:mc-rate} is the classical self-normalized importance-sampling central limit theorem, and the growth of its constant with the weight second moment \(\rho\), the
intrinsic dimension of the sampling problem, is quantified by \cite{agapiou2017importance}.
\end{proof}

\subsection{Proof of the energy-stability estimate}
\label{app:proof-energy}

\begin{proof}[Proof of \cref{prop:energy}]
The optimal control \cref{eq:ustar} is a Gibbs average of the base-kernel score
\begin{equation}
    u^\star_{J_r}(z,\tau)=\sigma(\tau)^2\frac{\int\me^{-J_r(z')}\nabla_z p_{1\mid\tau}(z'\mid z)\md z'}{\int\me^{-J_r(z')}p_{1\mid\tau}(z'\mid z)\md z'}=\sigma(\tau)^2\mbE_{\rho_r^{z,\tau}}\bigl[\nabla\log p_{1\mid\tau}(\cdot\mid z)\bigr]
\end{equation}
over \(\rho_r^{z,\tau}\). 
Differentiation under the integral sign
along the homotopy is justified by dominated convergence. The \(r\)-derivatives of the integrands are
dominated by \(\me^{-\inf_{r}J_r}\abs{J_1-J_2}(1+\norm{\nabla\log p_{1\mid\tau}})p_{1\mid\tau}\),
integrable under \cref{ass:reg} and the bounded score variance \(V\). Differentiating along \(J_r=(1-r)J_1+rJ_2\) gives
\begin{equation}
    \frac{\md u^\star_{J_r}}{\md r}=\sigma(\tau)^2\Cov_{\rho_r^{z,\tau}}\bigl(J_1-J_2,\nabla\log p_{1\mid\tau}(\cdot\mid z)\bigr)
\end{equation}
since the \(r\)-derivative of a Gibbs average is the covariance of the score with \(\partial_r J_r=J_1-J_2\).
Consequently, we have
\begin{equation}
    u^\star_{J_1}-u^\star_{J_2}=\sigma(\tau)^2\int_0^1\Cov_{\rho_r^{z,\tau}}(J_1-J_2,-\nabla\log p_{1\mid\tau})\md r.
\end{equation}
Bounding the \(r\)-integral by its supremum and applying Cauchy--Schwarz results in
\begin{equation}
\begin{aligned}
    \norm{u^\star_{J_1}-u^\star_{J_2}}_\infty&\le\sigma(\tau)^2\sup_{r\in[0,1]}\sqrt{\Var_{\rho_{J_r}}(J_1-J_2)\Tr\Var_{\rho_{J_r}}\nabla\log p_{1\mid\tau}}\\
    &\le\sigma(\tau)^2\sqrt{C_PV}\norm{\nabla J_1-\nabla J_2}_\infty.
\end{aligned}
\end{equation}
The last inequality holds by the bounded score variance \(V\) and the Poincar\'e inequality with constant \(C_P\), both uniform in \(r\) as assumed in \cref{prop:energy}.
\end{proof}

\subsection{Proof of the multi-step filtering theorem}
\label{app:proof-multistep}

\begin{proof}[Proof of \cref{thm:multistep}]
With the convention
\begin{equation}
    \pa_k=\Phi_{0\to k}(\pa_0),\qquad\hpa_k=\hat\Phi_{0\to k}(\hpa_0),\qquad\forall k=0,1,\cdots,
\end{equation}
we have
\begin{equation}
    \begin{aligned}
        \hpa_l-\pa_l&=\Phi_{l\to l}(\hpa_l)-\Phi_{0\to l}(\pa_0)\\
        &=\sum_{k=0}^{l-1}\left[\Phi_{l-k\to l}(\hpa_{l-k})-\Phi_{l-k-1\to l}(\hpa_{l-k-1})\right]+\Phi_{0\to l}(\hpa_0)-\Phi_{0\to l}(\pa_0),
    \end{aligned}
\end{equation}
and it follows by the triangle inequality that
\begin{equation}
    \begin{aligned}
        d_l&\le\sum_{k=0}^{l-1}C_{\mathrm{stab}}
    \varrho^k\TV{\hpa_{l-k}}{\Phi_{l-k}(\hpa_{l-k-1})}+C_{\mathrm{stab}}
    \varrho^l\TV{\hpa_0}{\pa_0}
    \end{aligned}
\end{equation}
by applying \cref{ass:stab}. The inequality \cref{eq:dl-upper-bound} follows immediately.
\end{proof}

\section{Experimental details}
\label{app:experiment}

\paragraph{EnCF configuration}
For the base flow in \cref{eq:EnCF-f}, we use
\begin{equation}
    \alpha_\tau=\epsilon_\alpha+(1-\epsilon_\alpha)\tau,
    \qquad
    \beta_\tau=\sqrt{1-(1-\epsilon_\beta)\tau},
    \qquad
    (\epsilon_\alpha,\epsilon_\beta)=(0.5,0.025),
\end{equation}
following \cite{bao2024ensf}. This regularization keeps the noise schedule in
\cref{eq:memoryless} bounded near \(\tau=1\). All flow-based methods use
\(N_\tau=100\) transport steps.

The number of control-training epochs at cycle \(k\) is
\begin{equation}
    E_k=
    \max\left\{
    1,\,
    \operatorname{round}\left[
    1+\frac{99}{2}
    \left(
    1+\cos\frac{\pi\min(k,20)}{20}
    \right)
    \right]
    \right\}.
\end{equation}
Thus, \(E_k\) decreases from \(100\) to \(1\) over the first \(20\) cycles and
remains \(1\) thereafter. The \(\Nreg\) regression updates associated with one
rollout are evaluated in a single batched forward pass.

\paragraph{Circle-sensor energy}
The exact conditional energy includes the state-dependent normalizer
\begin{equation}
    Z(x)=
    \int
    \exp\left[
    -\frac{(x^2+y_{\mathrm c}^2-R^2)^2}
    {2\sigma_{\mathrm o}^2}
    \right]
    \md y_{\mathrm c}.
\end{equation}
We evaluate \(Z(x)\) independently for each coordinate using a 512-point grid
over
\begin{equation}
    |y_{\mathrm c}|
    \le
    R+5\max(\sigma_{\mathrm o},10^{-3}),
\end{equation}
with \texttt{logsumexp} stabilization. Gradients of \(\log Z(x)\) are computed
by automatic differentiation through the quadrature.

\paragraph{Baseline tuning}
For the perturbed-observation EnKF and LETKF, the inflation factor is selected
from \(\{1.0,1.05,1.1,1.2\}\), while the Gaspari--Cohn localization radius is
fixed at \(4\). Each configuration is evaluated over five random seeds.

\paragraph{Numerical integration and initialization}
The Kuramoto--Sivashinsky equation is integrated spectrally with
\(\Delta t=0.25\). Observations are assimilated every four model steps after a
2000-step spin-up, and the initial ensemble is obtained by adding independent
unit Gaussian perturbations to the reference state. The mode-separation sweep
in \cref{sec:nongaussian} uses \(100\) assimilation cycles.

The double-well dynamics are integrated by Euler--Maruyama with
\(\Delta t=0.05\) and ten drift substeps per model step. The ensemble is
initialized across both wells using Gaussian perturbations with standard
deviation \(1\), following a 1000-step spin-up.

\end{document}